\documentclass[letterpaper]{article}
\usepackage{proceed2e}
\usepackage[margin=1in]{geometry}
\pdfoutput=1
\usepackage{times}

\title{Causal Consistency of Structural Equation Models}

\author{ {\bf Paul K.~Rubenstein{$^{*12}$}, Sebastian Weichwald{$^{*13}$}, Stephan Bongers$^{4}$, Joris M.~Mooij$^{4}$} \\
{\bf  Dominik Janzing$^{1}$, Moritz Grosse-Wentrup$^{1}$, Bernhard Sch\"olkopf$^{1}$}\\
$^*$Equal contribution \\
$^{1}$Empirical Inference, MPI for Intelligent Systems,
$^{2}$Machine Learning Group, University of Cambridge,\\ $^{3}$Max Planck ETH Center for Learning Systems,
$^{4}$Informatics Institute, University of Amsterdam\\
}

\makeatletter
\let\oldsection=\section
\renewcommand{\section}{\@ifstar\@ssection\@section}
\newcommand{\@ssection}[1]{\oldsection*{\MakeUppercase{#1}}}
\newcommand{\@section}[1]{\oldsection{\MakeUppercase{#1}}}
\let\oldsubsection=\subsection
\renewcommand{\subsection}{\@ifstar\@ssubsection\@subsection}
\newcommand{\@ssubsection}[1]{\oldsubsection*{\MakeUppercase{#1}}}
\newcommand{\@subsection}[1]{\oldsubsection{\MakeUppercase{#1}}}
\makeatother

\usepackage{tikz}
\usetikzlibrary{arrows.meta}
\usetikzlibrary{calc}
\usetikzlibrary{shapes}

\tikzstyle{green}=[fill=green!50!black,draw=green!50!black,color=green!50!black]
\tikzstyle{red}=[fill=red!50!black,draw=red!50!black,color=red!50!black]

\definecolor{docolour}{gray}{.75}

\tikzstyle{keep}=[circle,draw,minimum size=2em]
\tikzstyle{drop1}=[circle,fill=green!50!white]
\tikzstyle{drop2a}=[circle,fill=red!50!white]
\tikzstyle{drop2b}=[circle,fill=blue!50!white]
\tikzstyle{micro}=[circle,draw,minimum size=1em,fill=docolour]
\newcommand\dropNode[1]{\tikz[baseline={([yshift=-.6ex]current bounding box.center)}]\node[#1] at(0,0) {};}
\usetikzlibrary{shadings}
\pgfdeclarelayer{background}
\pgfsetlayers{background,main}

\usepackage{ifthen}

\usepackage{subcaption}

\usepackage{enumitem}

\usepackage{amsmath}
\usepackage{amsfonts}
\usepackage{amssymb}
\usepackage{mathtools}
\DeclareMathOperator{\doop}{do}
\DeclareMathOperator{\pa}{pa}

\newcommand{\nulli}{\varnothing}
\newcommand{\indexset}{\mathbb{I}}
\usepackage{amsthm}
\newtheorem{definition}{Definition}
\newtheorem{example}[definition]{Example}
\newtheorem{lemma}[definition]{Lemma}
\newtheorem{theorem}[definition]{Theorem}

\mathchardef\ordinarycolon\mathcode`\:
\mathcode`\:=\string"8000
\begingroup \catcode`\:=\active
  \gdef:{\mathrel{\mathop\ordinarycolon}}
\endgroup

\linepenalty=1000
\usepackage{microtype}

\usepackage{stackengine}
\usepackage{stix}

\usepackage[square,numbers,sort&compress]{natbib}
\usepackage{multibib}
\newcites{appendix}{References}
\bibliographystyle{abbrvnat}
\bibliographystyleappendix{abbrvnat}

\begin{document}

\maketitle

\begin{abstract}
Complex systems can be modelled at various levels of detail.
Ideally, causal models of the same system should be consistent with one another in the sense that they agree in their predictions of the effects of interventions.
We formalise this notion of consistency in the case of Structural Equation Models (SEMs) by introducing \emph{exact transformations} between SEMs.
This provides a general language to consider, for instance, the different levels of description in the following three scenarios:
(a)~models with large numbers of variables versus models in which the `irrelevant' or unobservable variables have been marginalised out;
(b)~micro-level models versus macro-level models in which the macro-variables are aggregate features of the micro-variables;
(c)~dynamical time series models versus models of their stationary behaviour.
Our analysis stresses the importance of well specified interventions in the causal modelling process and sheds light on the interpretation of cyclic SEMs.
\end{abstract}

\section{Introduction}

Physical systems or processes in the real world are complex and can be understood at various levels of detail.
For instance, a gas in a volume consists of a large number of molecules.
But instead of modelling the motions of each particle individually (micro-level), we may choose to consider macroscopic properties of their motions such as temperature and pressure.
Our decision to use such macroscopic properties is first necessitated by practical considerations.
Indeed, for all but extremely simple cases, making a measurement of all the individual molecules is practically impossible and our resources insufficient for modelling the ${\sim}10^{22}$ particles present per litre of ideal gas.
Furthermore, the decision for a macroscopic description level is also a pragmatic one: if we only wish to reason about temperature and pressure, a model of $10^{22}$ particles is ill-suited.

Statistical physics explains how higher-level concepts such as temperature and pressure arise as statistical properties of a system of a large number of particles, justifying the use of a macro-level model as a useful transformation of the micro-level model~\citep{Balian}.
However, in many cases aggregate or indirect measurements of a complex system form the basis of a macroscopic description of the system, with little theory to explain whether this is justified or how the micro- and macro-descriptions stand in relation to each other.

Due to deliberate modelling choice or the limited ability to observe a system, differing levels of model descriptions are ubiquitous and occur, amongst possibly others, in the following three settings:
\vspace{-.5em}
\begin{itemize}[noitemsep]
    \item[(a)] Models with large numbers of variables versus models in which the `irrelevant' or unobservable variables have been marginalised out \citep{bongers2016structural}; e.\,g.\ modelling blood cholesterol levels and risk of heart disease while ignoring other blood chemicals or external factors such as stress.

    \item[(b)] Micro-level models versus macro-level models in which the macro-variables are aggregate features of the micro-variables \citep{simon1961aggregation,iwasaki1994causality,hoel2013quantifying,chalupka2015visual,chalupka2016multi}; e.\,g.\ instead of modelling the brain as consisting of $100$ billion neurons it can be modelled as averaged neuronal activity in distinct functional brain regions.

    \item[(c)] Dynamical time series models versus models of their stationary behaviour \citep{fisher1970correspondence,iwasaki1994causality,dash2001caveats,lacerda2012discovering,mooij2013ode,mooij2013cyclic}; e.\,g.\ modelling only the final ratios of reactants and products of a time evolving chemical reaction.
\end{itemize}
\vspace{-.5em}
In the context of causal modelling, such differing model levels  should be consistent with one another in the sense that they agree in their predictions of the effects of interventions. The particular causal models we focus on in this paper are Structural Equation Models (SEMs, Section~\ref{sec:SEMs}, Section~\ref{sec:sem-for-causal-modelling}) \citep{spirtes2000causation,pearl2009causality}.

In Section~\ref{sec:sem-transformation}, we introduce the notion of an exact transformation between two SEMs, providing us with a general framework to evaluate when two models can be thought of as causal descriptions of the same system.
An important novel idea of this paper is to explicitly make use of a natural ordering on the set of interventions.
On a high level, if an SEM can be viewed as an exact transformation of another SEM, we are provided with an explicit correspondence between the two models in such a way that causal reasoning on both levels is consistent.
We discuss this notion of consistency in detail in Sections~\ref{subsec:causal-interpretation-transformation} and~\ref{sec:wrong}.

In Section~\ref{sec:example-transformations} we apply this mathematical framework and prove the exactness of transformations belonging to each of the three categories listed above, with practical implications for the following questions in causal modelling:
When can we model only a subsystem of a more complex system?
When does a micro-level system admit a causal description in terms of macro-level features?
How do cyclic SEMs arise?
The fact that these distinct problems can all be considered using the language of transformations between SEMs demonstrates the generality of our approach.
We close in Section~\ref{sec:questions} with a discussion.

\subsection{A historical motivation: Cholesterol and Heart Disease}\label{sec:cholesterol}

\begin{figure}
\begin{subfigure}{.45\linewidth}
\center\
\begin{tikzpicture}
\node (d1) at(0,0.0) {diet};
\node (LDL) at(1.5,0.3) {LDL};
\node (HDL) at(1.5,-0.3) {HDL};
\node (HD) at(3,0) {HD};

\draw[->,thick] (d1) -- (HDL);
\draw[->,thick] (d1) -- (LDL);
\draw[->,red,thick] (LDL) -- node[above,yshift=-1] {$-$} (HD);
\draw[->,green,thick] (HDL) -- node[below] {$+$} (HD);
\end{tikzpicture}
\caption{}\label{fig:cholesterol:b}
\end{subfigure}
\hfill
\begin{subfigure}{.45\linewidth}
\center\
\begin{tikzpicture}
\node (d1) at(0,0.) {diet};
\node (LHDL) at(1.5,0) {TC};
\node (HD) at(3,0) {HD};

\draw[->,thick] (d1) -- (LHDL);
\draw[->,red,dash pattern= on 3pt off 3pt,ultra thick] (LHDL) -- node[above,yshift=-1] {$-$} (HD);
\draw[->,green,dash pattern= on 3pt off 3pt,dash phase=3pt,ultra thick] (LHDL) -- node[below] {$+$} (HD);
\end{tikzpicture}
\caption{}\label{fig:cholesterol:a}
\end{subfigure}
\caption{As illustrated by~(a), the current consensus is that LDL (resp.\ HDL) has a negative (resp.\ positive) effect on heart disease (HD). Considering TC = LDL + HDL to be a causal variable as in~(b) leads to problems: two diets promoting raised LDL levels and raised HDL levels have the same effect on TC but opposite effects on heart disease. Hence different studies come to contradictory conclusions about the effect of TC on heart disease.}
\label{fig:cholesterol}
\end{figure}
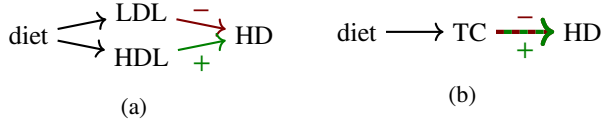

In the following we give an example of the problems that can arise when there exists no consistent correspondence between two causal models, i.\,e.\ neither model can be viewed as an exact transformation of the other. This example falls into category (b) of the differing model levels listed above and was used by~\cite{spirtes2004causal} to illustrate problems in the causal modelling process.

Historically, the level of total cholesterol in the blood (TC) was thought to be an important variable in determining risk of heart disease (HD).
To investigate this, different experiments were carried out in which patients were assigned to different diets in order to raise or lower TC\@.
Conflicting evidence was found by different experiments: some found that higher TC had the effect of lowering HD, while others found the opposite (cf.\ Figure~\ref{fig:cholesterol:a}) \citep{truswell2010cholesterol,steinberg2011cholesterol}.

From our point of view, this problem (seemingly conflicting studies) arose from trying to perform an `invalid' transformation of the `true' underlying model (cf.\ Figure~\ref{fig:cholesterol:b}).
According to the American Heart Association, the current scientific consensus is that the two types of blood cholesterol, low-density lipoprotein (LDL) and high-density lipoprotein (HDL), have a negative and positive effect on HD respectively.
Assigning diets that raise LDL or HDL both raise TC but have different effects on HD\@.
It is therefore not possible to transform the model in Figure~\ref{fig:cholesterol:b} into the model in Figure~\ref{fig:cholesterol:a} without leading to conflict:
in order to reason about the causes of HD we need to consider the variables LDL and HDL separately.

\section{Structural Equation Models}\label{sec:SEMs}

SEMs are a widely used framework in causal modelling, with applications in neuroscience, economics and the social sciences \citep{pearl2009causality,bollen2014structural}. In this section we introduce them as an abstract mathematical object; in Section~\ref{sec:sem-for-causal-modelling} we describe their use as a causal modelling tool.
Readers already familiar with SEMs should note that our definition is more general and deviates from the standard definition of SEMs in the following ways: we do not require that all possible perfect interventions be modelled; we do not assume independence of exogenous variables;\footnote{Exogenous variables are also referred to as \emph{noise variables} in the literature. Our relaxation of the assumption of independent exogenous variables means our models may be considered a type of semi-Markovian causal model.} and we do not require acyclicity.

\begin{definition}[Structural Equation Model (SEM)]
    Let $\indexset_X$ be an index set.
    An SEM $\mathcal{M}_X$ over  variables ${X = (X_i : i\in\indexset_X )}$ taking value in $\mathcal{X}$ is a triple $\left(\mathcal{S}_X, \mathcal{I}_X, \mathbb{P}_{E} \right)$ where
    \vspace{-.5em}
    \begin{itemize}[noitemsep]
        \item $\mathcal{S}_X$ is a set of structural equations, i.\,e.\ it is a set of equations $X_i = f_i\left( X , E_i \right)\ $ for $i \in \indexset_X$;
        \item ($\mathcal{I}_X, \leq_X)$ is a subset of all perfect interventions equipped with a natural partial ordering (see below), i.\,e.\ it is an index set where each index corresponds to a particular perfect intervention on some of the $X$ variables;
        \item $\mathbb{P}_{E}$ is a distribution over the exogenous variables $E = ( E_i : i\in\indexset_X)$;
        \item with $\mathbb{P}_E$-probability one, under any intervention ${i \in \mathcal{I}_X}$ there is a unique solution $x\in\mathcal{X}$ to the intervened structural equations. This ensures that for any intervention ${i \in \mathcal{I}_X}$, $\mathcal{M}_X$ induces a well-defined distribution over $\mathcal{X}$.%
\footnote{That is, with probability one over the exogenous variables $E$, for each draw $E=e$ there exists a unique value $x\in \mathcal{X}$ such that $e$ and $x$ satisfy the intervened structural equations. The distribution of $E$ in conjunction with $\mathcal{S}_X$ then implies a distribution over $\mathcal{X}$ for each intervention $i \in \mathcal{I}_X$ via these unique solutions.
If the SEM is acyclic, this is always satisfied; we impose this condition because we also consider \emph{cyclic} SEMs \cite{bongers2016structural}.}
    \end{itemize}
    \vspace{-.5em}
\end{definition}

In an SEM, each $X_i$ is a function of the $X$-variables and the exogenous variable $E_i$.
In this mathematical model, a perfect intervention on a single variable $\doop(X_i=x_i)$ is realised by replacing the structural equation for variable $X_i$ in $\mathcal{S}_X$ with $X_i = x_i$.
Perfect interventions on multiple variables, e.g. $\doop(X_i=x_i, X_j=x_j)$, are similarly realised by replacing the structural equations for each variable individually.
Elements of $\mathcal{I}_X$ correspond to perfectly intervening on a subset of the $X$ variables, setting them to some particular combination of values.

$\mathcal{I}_X$ has a natural partial ordering in which, for interventions ${i, j \in \mathcal{I}_X}$, ${i\leq_X j}$ if and only if $i$ intervenes on a subset of the variables that $j$ intervenes on and sets them equal to the same values as $j$.
For example, ${\doop(X_i=x_i) \leq_X \doop(X_i=x_i, X_j=x_j)}$.\footnote{Informally, this means that $j$ can be performed after $i$ without having to change or undo any of the changes to the structural equations made by $i$.
Not all pairs of elements must be comparable: for instance, if $i= \doop(X_1=x_1)$ and $j = \doop(X_2=x_2)$, then neither $i\leq_X j$ nor $j \leq_X i$.} The observation that this structure is important is a contribution of this paper. We make crucial use of it in the next section.

The purpose of the following example is to illustrate how SEMs are written in our notation and to provide and example of a restricted set of interventions $\mathcal{I}_X$.

\begin{example}\label{example1}
Consider the following SEM defined over the variables $\{ B_1,B_2,L \}$
\begin{align*}
\mathcal{S}_X = \big\{ & B_1 = E_1,\ B_2 = E_2,\ L = \operatorname{OR}(B_1,B_2,E_3) \big\} \\
\mathcal{I}_X = \big\{ & \nulli,\ \doop(B_1=0),\ \doop(B_2=0),\\
& \doop(B_1=0,B_2=0) \big\}, \\
\{ E_1,E_2,E_3 \} & \overset{\text{iid}}{\sim} \mathrm{Bernoulli}(0.5)
\end{align*}
where by the element $\nulli \in \mathcal{I}$ we denote the null-intervention corresponding to the unintervened SEM\@.
\end{example}

\section{SEMs for Causal Modelling}\label{sec:sem-for-causal-modelling}

In addition to being abstract mathematical objects, SEMs are used in causal modelling to describe distributions of variables and how they change under interventions \citep{pearl2009causality}.
The $\doop$-interventions as abstract manipulations of SEMs are understood as corresponding to actual (or potentially only hypothetical) physical implementations in the real world, i.\,e.\ the model is `rooted in reality'.
For instance, if a binary variable $B_1$ in an SEM reflects whether a light bulb is emitting light, then $\doop(B_1=0)$ could be achieved by flipping the light switch or by removing the light bulb.

The SEM in Example~\ref{example1} could be thought of as a simple causal model of two light bulbs $B_1$ and $B_2$ and the presence of light $L$ in a room with a window.
Suppose that we have no access to the light switch and there are no curtains in the room but that we can intervene by removing the light bulbs.
We can model this restricted set of interventions by $\mathcal{I}_X$, i.\,e.\ the $\doop$-intervention on the SEM side ${\doop(B_1=0)}$  corresponds to removing the light bulb $B_1$.

The partial ordering of $\mathcal{I}_X$ corresponds to the ability to compose physical implementations of interventions. The fact that we can first remove light bulb $B_1$ (${\doop(B_1=0)}$) and then afterwards remove light bulb $B_2$ (resulting in the combined intervention ${\doop(B_1=0, B_2=0)}$)  is reflected in the partial ordering via the relation ${\doop(B_1=0) \leq_X \doop(B_1=0,B_2=0)}$.

\section{Transformations between SEMs}\label{sec:sem-transformation}

We now work towards our definition of an exact transformation between SEMs\@.
Our core idea is to analyse the correspondence between different levels of modelling by considering one model to be a transformation of the other.
We discuss in Section~\ref{subsec:causal-interpretation-transformation} how causal reasoning in two SEMs relate when one SEM can be viewed as an exact transformation of the other and in Section~\ref{sec:wrong} we illustrate what can go wrong when this is not the case.

\subsection{Distributions implied by an SEM}

Usually, a statistical model implies a single joint distribution over all variables once its parameters are fixed.
SEMs are different in that, once the parameters are fixed, an SEM implies a family of joint distributions over the random variables, one for each intervention.
That is, for each intervention $i \in \mathcal{I}_X$, the SEM $\mathcal{M}_X$ defines a distribution over $\mathcal{X}$ which we denote by $\mathbb{P}_X^{\doop(i)}$.
Throughout, we will denote the null-intervention corresponding to the unintervened setting by $\nulli\in \mathcal{I}_X$ .
We can write the poset of all distributions implied by the SEM $\mathcal{M}_X$ as
\[\mathcal{P}_X := \left( \left\{ \mathbb{P}_X^{\doop(i)} \enspace : \enspace i \in \mathcal{I}_X \right\}, \leq_X \right) \]
where $\leq_X$ is the partial ordering inherited from $\mathcal{I}_X$, i.\,e.\ ${\mathbb{P}_X^{\doop(i)} \leq_X \mathbb{P}_X^{\doop(j)} \iff i \leq_X j}$.%
\footnote{More formally, one would need to define $\mathcal{P}_X$ to be the poset of \emph{tuples} $\left(i, \mathbb{P}_X^{\doop(i)}\right)$ to avoid problems in the case that $\mathbb{P}_X^{\doop(i)} = \mathbb{P}_X^{\doop(j)}$ for some $i\not=_X j$. Doing so would not require a change to Definition~3 or affect the further results of this paper. To avoid notational burden in our exposition, we omit this treatment.}

Note that $\mathcal{P}_X$ contains all of the information in $\mathcal{M}_X$ about the different distributions implied by the SEM and, importantly, how they are related via the interventions.%
\footnote{For example, the distribution over the variables $X$ in the observational setting, $\mathbb{P}_X^\nulli$, changes to $\mathbb{P}_X^{\doop(i)}$ if we implement the intervention ${\doop(i)}$, and the partial ordering contains all information about which interventions can be composed.}

\subsection{Transformations of random variables}

Suppose we have a function ${\tau: \mathcal{X} \to \mathcal{Y}}$ which maps the variables of the SEM $\mathcal{M}_X$ to another space $\mathcal{Y}$.
Observe that since $X$ is a random variable, $\tau(X)$ is also a random variable.
For any distribution $\mathbb{P}_X$ on $\mathcal{X}$ we thus obtain the distribution of the variable $\tau(X)$ on $\mathcal{Y}$ as $\mathbb{P}_{\tau(X)} = \tau\left(\mathbb{P}_X\right)$ via the push-forward measure.

In particular, for each intervention $i \in \mathcal{I}_X$ we can define the induced distribution $\mathbb{P}_{\tau(X)}^{i} = \tau\left(\mathbb{P}_X^{\doop(i)}\right)$.
We can write the poset of distributions on $\mathcal{Y}$ that are induced by the original SEM $\mathcal{M}_X$ and the transformation $\tau$ as
\[\mathcal{P}_{\tau(X)} := \left( \left\{ \mathbb{P}_{\tau(X)}^{i} \enspace : \enspace i \in \mathcal{I}_X \right\}, \leq_X \right) \]
where $\leq_X$ is the partial ordering inherited from $\mathcal{P}_X$ (and in turn from $\mathcal{I}_X$).

$\mathcal{P}_{\tau(X)}$ is just a structured collection of distributions over $\mathcal{Y}$, indexed by interventions $\mathcal{I}_X$ on the $\mathcal{X}$-level; importantly, the indices are \emph{not} interventions on the $\mathcal{Y}$-level.

\subsection{Exact Transformations between SEMs}\label{sec:exact_transformation_sem}

Although $\mathcal{P}_{\tau(X)}$ is a poset of distributions over $\mathcal{Y}$, there does not necessarily exist an SEM $\mathcal{M}_Y$ over $\mathcal{Y}$ that implies it.
For instance, if there is some intervention ${i \in \mathcal{I}_X \setminus \{ \nulli \}}$ such that none of the variables $Y_i$ is constant under the distribution $\mathbb{P}_{\tau(X)}^{i}$, then $\mathbb{P}_{\tau(X)}^{i}$ could not possibly be expressed as arising from a $\doop$-intervention $j\in \mathcal{I}_Y \setminus \{\nulli\}$ in any SEM over~$\mathcal{Y}$.\footnote{This problem is elaborated upon in~\cite{eberhardt2016green}.}

The case in which there \emph{does} exist an SEM $\mathcal{M}_Y$ that implies $\mathcal{P}_{\tau(X)}$ is special, motivating our main definition.

\begin{definition}[Exact Transformations between SEMs]\label{def:exacttrafos}
Let $\mathcal{M}_X$ and $\mathcal{M}_Y$ be SEMs and $\tau: \mathcal{X} \to \mathcal{Y}$ be a function.
We say $\mathcal{M}_Y$ is an \emph{exact}  $\tau$-transformation of $\mathcal{M}_X$ if there exists a \emph{surjective order-preserving} map $\omega:\mathcal{I}_X\rightarrow \mathcal{I}_Y$ such that
\[ \mathbb{P}_{\tau(X)}^{i} = \mathbb{P}_Y^{\doop(\omega(i))} \quad \forall i \in \mathcal{I}_X \]
where $\mathbb{P}_{\tau(X)}^{i}$ is the distribution of the $\mathcal{Y}$-valued random variable $\tau(X)$ with $X \sim \mathbb{P}_X^{\doop(i)}$.
\end{definition}

Order-preserving means that ${i \leq_X j \implies \omega(i) \leq_Y \omega(j)}$.
It is important that the converse need not in general hold as this would imply that $\omega$ is injective,%
\footnote{Since ${\omega(i)=\omega(j) \iff \left(\omega(i) \leq_Y \omega(j)\right) \land \left(\omega(j) \leq_Y \omega(i)\right)}$, which, if the converse held, would imply that $\left(i \leq_X j\right) \land \left(j \leq_X i\right)$, which is equivalent to $i=j$.}
and hence also bijective.
This would constrain the ways in which $\mathcal{M}_Y$ can be `simpler' than $\mathcal{M}_X$.\footnote{For instance, if it were necessary that $\omega$ be bijective, Theorems~\ref{theorem:childless} and \ref{theorem:micro-macro} would not hold.}
That $\omega$ is surjective ensures that for any $\doop$-intervention $j \in \mathcal{I}_Y$ on $\mathcal{M}_Y$ there is at least one corresponding intervention on the $\mathcal{M}_X$ level, namely an element of $\omega^{-1}(\{j\}) \subseteq \mathcal{I}_X$.
The following two results follow immediately from the definition (cf.\ proofs in Appendix~\ref{first_properties:appendix}).

\begin{lemma}\label{lemma:elementary}
The identity mapping and permuting the labels of variables are both exact transformations.
\end{lemma}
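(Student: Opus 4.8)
The plan is to verify Definition~\ref{def:exacttrafos} directly in each of the two cases by exhibiting a suitable map $\omega$ and checking the three requirements: surjectivity, order-preservation, and the pushforward equality.

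For the identity mapping, take $\mathcal{X} = \mathcal{Y}$, $\tau = \mathrm{id}$, and $\mathcal{M}_Y = \mathcal{M}_X$. I would choose $\omega = \mathrm{id}: \mathcal{I}_X \to \mathcal{I}_X$, which is trivially a bijection (hence surjective) and order-preserving. The required equality $\mathbb{P}_{\tau(X)}^{i} = \mathbb{P}_Y^{\doop(\omega(i))}$ then reduces to $\mathbb{P}_X^{\doop(i)} = \mathbb{P}_X^{\doop(i)}$, which holds tautologically. This case requires no real work.

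For relabeling, fix a bijection $\sigma: \indexset_X \to \indexset_Y$ between the two index sets and let $\tau: \mathcal{X} \to \mathcal{Y}$ be the induced coordinate permutation defined by $\tau(x)_{\sigma(i)} = x_i$. First I would construct the relabeled SEM $\mathcal{M}_Y$ explicitly: its structural equations are $Y_{\sigma(i)} = f_i(\,\cdot\,)$ with every occurrence of $X_i$ replaced by $Y_{\sigma(i)}$ and of $E_i$ by the correspondingly relabeled exogenous variable; its exogenous distribution is the pushforward of $\mathbb{P}_E$ under $\sigma$; and its intervention set is $\mathcal{I}_Y = \{\omega(i) : i \in \mathcal{I}_X\}$, where $\omega$ sends a $\doop$-intervention on a set of $X$-variables to the same intervention phrased in terms of the relabeled $Y$-variables. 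Because $\sigma$ is a bijection, $\omega$ is a bijection and therefore surjective, and it is an order isomorphism between $(\mathcal{I}_X, \leq_X)$ and $(\mathcal{I}_Y, \leq_Y)$, since these orderings depend only on the inclusion of intervened coordinate sets and the values assigned, both of which are preserved under relabeling.

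The one place calling for care is the pushforward identity $\mathbb{P}_{\tau(X)}^{i} = \mathbb{P}_Y^{\doop(\omega(i))}$. I would establish it by observing that relabeling commutes with solving the intervened equations: for $\mathbb{P}_E$-almost every draw of the exogenous variables, the unique solution of the $\omega(i)$-intervened equations of $\mathcal{M}_Y$ is exactly $\tau$ applied to the unique solution of the $i$-intervened equations of $\mathcal{M}_X$. Applying this equality draw-by-draw and pushing forward shows that the distribution of $\tau(X)$ with $X \sim \mathbb{P}_X^{\doop(i)}$ coincides with $\mathbb{P}_Y^{\doop(\omega(i))}$. The existence and uniqueness of these solutions under every $i \in \mathcal{I}_X$, guaranteed by the last bullet of the SEM definition, transfers to $\mathcal{M}_Y$ precisely because $\tau$ is a bijection, so $\mathcal{M}_Y$ is itself a well-defined SEM. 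Beyond this bookkeeping no genuine difficulty arises; the content of the lemma is simply that these elementary operations fit the formal definition, which also serves as a sanity check that Definition~\ref{def:exacttrafos} is not vacuous.
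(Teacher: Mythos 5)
Your proposal is correct and follows essentially the same route as the paper: construct the relabelled SEM $\mathcal{M}_Y$ by substituting $Y_{\sigma(i)}$ for $X_i$ throughout the structural equations and interventions, take $\omega$ to be the induced relabelling of interventions, and observe that everything required by Definition~\ref{def:exacttrafos} holds by construction. The paper's proof is a single sentence that leaves all of the verification implicit; you simply spell out the bookkeeping (bijectivity of $\omega$, order isomorphism, commutation of relabelling with solving the intervened equations) that the paper takes for granted.
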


This is a good sanity check; it would be problematic if this were not the case and the labelling of our variables mattered. Similarly, compositions of exact transformations are also exact.

\begin{lemma}[Transitivity of exact transformations]\label{theorem:transitivity}
    If $\mathcal{M}_Z$ is an exact $\tau_{ZY}$-transformation of $\mathcal{M}_Y$ and $\mathcal{M}_Y$ is an exact $\tau_{YX}$-transformation of $\mathcal{M}_X$, then $\mathcal{M}_Z$ is an exact $(\tau_{ZY}\circ\tau_{YX})$-transformation of $\mathcal{M}_X$.
\end{lemma}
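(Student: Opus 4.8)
The plan is to exhibit an explicit witness map and then check the three requirements of Definition~\ref{def:exacttrafos} by composing the data supplied by the two hypotheses. By assumption there are surjective order-preserving maps $\omega_{YX}:\mathcal{I}_X\to\mathcal{I}_Y$ and $\omega_{ZY}:\mathcal{I}_Y\to\mathcal{I}_Z$ with $\mathbb{P}_{\tau_{YX}(X)}^{i}=\mathbb{P}_Y^{\doop(\omega_{YX}(i))}$ for all $i\in\mathcal{I}_X$ and $\mathbb{P}_{\tau_{ZY}(Y)}^{j}=\mathbb{P}_Z^{\doop(\omega_{ZY}(j))}$ for all $j\in\mathcal{I}_Y$. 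I would take the candidate witness to be the composition $\omega:=\omega_{ZY}\circ\omega_{YX}:\mathcal{I}_X\to\mathcal{I}_Z$, paired with the transformation $\tau:=\tau_{ZY}\circ\tau_{YX}$ already named in the statement.

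First I would check the two structural properties of $\omega$. Surjectivity is immediate, since a composition of surjections is a surjection: $\omega_{YX}$ maps onto $\mathcal{I}_Y$ and $\omega_{ZY}$ maps onto $\mathcal{I}_Z$. Order-preservation is equally direct: if $i\leq_X i'$ then $\omega_{YX}(i)\leq_Y\omega_{YX}(i')$ because $\omega_{YX}$ is order-preserving, and applying the same property of $\omega_{ZY}$ gives $\omega_{ZY}(\omega_{YX}(i))\leq_Z\omega_{ZY}(\omega_{YX}(i'))$, i.e.\ $\omega(i)\leq_Z\omega(i')$.

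The substantive step is verifying the distributional identity $\mathbb{P}_{\tau(X)}^{i}=\mathbb{P}_Z^{\doop(\omega(i))}$ for each $i\in\mathcal{I}_X$. The engine here is functoriality of the push-forward, $(\tau_{ZY}\circ\tau_{YX})(\mathbb{P})=\tau_{ZY}(\tau_{YX}(\mathbb{P}))$, which lets me peel off one transformation at a time. Fixing $i\in\mathcal{I}_X$ and writing $\mathbb{P}_{\tau(X)}^{i}=\tau(\mathbb{P}_X^{\doop(i)})$, I would first apply functoriality to rewrite this as $\tau_{ZY}(\tau_{YX}(\mathbb{P}_X^{\doop(i)}))=\tau_{ZY}(\mathbb{P}_{\tau_{YX}(X)}^{i})$, then invoke the first hypothesis to replace the inner distribution by $\mathbb{P}_Y^{\doop(\omega_{YX}(i))}$. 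Since $\omega_{YX}(i)\in\mathcal{I}_Y$, the quantity $\tau_{ZY}(\mathbb{P}_Y^{\doop(\omega_{YX}(i))})$ is by definition $\mathbb{P}_{\tau_{ZY}(Y)}^{\,\omega_{YX}(i)}$, to which the second hypothesis applies, yielding $\mathbb{P}_Z^{\doop(\omega_{ZY}(\omega_{YX}(i)))}=\mathbb{P}_Z^{\doop(\omega(i))}$. Chaining these equalities closes the argument.

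I do not expect a genuine obstacle here, as all three conditions transfer cleanly under composition; the only point requiring care is bookkeeping, namely ensuring that the intermediate index $\omega_{YX}(i)$ is a legitimate element of $\mathcal{I}_Y$ so that the second exactness equation may be instantiated at it, and that the push-forward of $\mathbb{P}_Y^{\doop(\omega_{YX}(i))}$ under $\tau_{ZY}$ is matched to the correctly indexed member of $\mathcal{P}_{\tau_{ZY}(Y)}$ rather than to some other intervention. Both are guaranteed because $\omega_{YX}$ has codomain $\mathcal{I}_Y$ and because $\mathbb{P}_{\tau_{ZY}(Y)}^{j}$ is defined for every $j\in\mathcal{I}_Y$.
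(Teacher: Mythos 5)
Your proposal is correct and follows essentially the same route as the paper's proof: both take the composition $\omega_{ZY}\circ\omega_{YX}$ as the witness, observe that surjectivity and order-preservation are closed under composition, and chain the two exactness identities via functoriality of the push-forward. Your write-up merely spells out the intermediate steps that the paper compresses into a single displayed equation.
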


The following theorem is a consequence of the fact that $\omega$ is order-preserving. This is a mathematical formalisation of the sense in which an exact transformation preserves causal reasoning, which will be elaborated upon in the next subsection.

\begin{theorem}[Causal consistency under exact transformations]\label{lemma:commuting}
Suppose that  $\mathcal{M}_Y$ is an exact $\tau$-transformation of $\mathcal{M}_X$ and $\omega$ is a corresponding surjective order-preserving mapping between interventions. Let $i,j \in \mathcal{I}_X$ be interventions such that $i\leq_X j$.
Then the following diagram commutes:\\
\begin{tikzpicture}[thick, every node/.style = {circle, minimum size=.5cm}]

{
  \node[draw=none](Px) {$\mathbb{P}_X$};
  \node[draw=none, right of=Px, xshift=2cm](PxInt) {$\mathbb{P}_X^{\doop(i)}$};
  \node[draw=none, right of=PxInt, xshift=2cm](PxInt2) {$\mathbb{P}_X^{\doop(j)}$};

  \node[draw=none, below of=Px, yshift=-1.5cm](Py) {$\mathbb{P}_Y$};
  \node[draw=none, right of=Py, xshift=2cm](PyInt) {$\mathbb{P}_Y^{\doop(\omega(i))}$};
  \node[draw=none, right of=PyInt, xshift=2cm](PyInt2) {$\mathbb{P}_Y^{\doop(\omega(j))}$};

  \draw[-{Latex[length=2mm,width=2mm]}](Px)--(Py);
  \draw[-{Latex[length=2mm,width=2mm]}](Px)--(PxInt);
  \draw[-{Latex[length=2mm,width=2mm]}](Py)--(PyInt);
  \draw[-{Latex[length=2mm,width=2mm]}](PxInt)--(PyInt);
  \draw[-{Latex[length=2mm,width=2mm]}](PxInt)--(PxInt2);
  \draw[-{Latex[length=2mm,width=2mm]}](PyInt)--(PyInt2);
  \draw[-{Latex[length=2mm,width=2mm]}](PxInt2)--(PyInt2);

  \node[draw=none, yshift=0.5cm](xInt) at ($(Px)!0.5!(PxInt)$){$\doop(i)$};
  \node[draw=none, yshift=0.5cm](xInt2) at ($(PxInt)!0.5!(PxInt2)$){$\doop(j)$};
  \node[draw=none, yshift=0.5cm](yInt) at ($(Py)!0.5!(PyInt)$){$\doop(\omega(i))$};
  \node[draw=none, yshift=0.5cm](yInt2) at ($(PyInt)!0.5!(PyInt2)$){$\doop(\omega(j))$};
  \node[draw=none, xshift=-0.5cm](tau) at ($(Px)!0.5!(Py)$){$\tau$};
  \node[draw=none, xshift=0.5cm](tauInt) at ($(PxInt)!0.5!(PyInt)$){$\tau$};
  \node[draw=none, xshift=0.5cm](tauInt2) at ($(PxInt2)!0.5!(PyInt2)$){$\tau$};
}
\end{tikzpicture}
\end{theorem}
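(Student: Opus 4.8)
The plan is to read the diagram as two adjacent squares whose vertical arrows are the push-forward map $\mathbb{P}\mapsto\tau(\mathbb{P})$ and whose horizontal arrows are the operations of passing to the distribution under a given intervention. Commutativity then splits into two ingredients that I would handle separately: (i) that travelling right-then-down agrees with travelling down-then-right in each square, and (ii) that the bottom row is a legitimate chain of $\doop$-interventions on $\mathcal{M}_Y$, i.e.\ that $\doop(\nulli)$, $\doop(\omega(i))$ and $\doop(\omega(j))$ form an increasing, composable sequence rather than a mere relabelling.

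For (i) I would simply invoke Definition~\ref{def:exacttrafos}. Exactness provides, for every $k\in\mathcal{I}_X$, the identity $\tau\big(\mathbb{P}_X^{\doop(k)}\big)=\mathbb{P}_{\tau(X)}^{k}=\mathbb{P}_Y^{\doop(\omega(k))}$. Instantiating this at $k=\nulli,i,j$ yields precisely the three vertical arrows of the diagram, and each square then commutes by substitution: in the left square both paths from $\mathbb{P}_X$ to $\mathbb{P}_Y^{\doop(\omega(i))}$ evaluate to $\tau\big(\mathbb{P}_X^{\doop(i)}\big)$ via the $k=i$ instance, and in the right square both paths from $\mathbb{P}_X^{\doop(i)}$ to $\mathbb{P}_Y^{\doop(\omega(j))}$ evaluate to $\tau\big(\mathbb{P}_X^{\doop(j)}\big)$ via the $k=j$ instance. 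No computation beyond reading off the defining equation is needed here.

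The substantive point, and where the order-preservation hypothesis enters, is (ii). The horizontal arrows on the $X$-side are meaningful because $\nulli\leq_X i\leq_X j$: the last relation is the hypothesis, and the first holds because $\nulli$ intervenes on the empty set and is therefore the minimum of $\mathcal{I}_X$. To transfer this to the $Y$-side I would apply order-preservation, $i\leq_X j\implies\omega(i)\leq_Y\omega(j)$ and $\nulli\leq_X i\implies\omega(\nulli)\leq_Y\omega(i)$, and then observe that $\omega(\nulli)=\nulli$. The latter follows because $\omega$ sends the minimum $\nulli$ to a lower bound of $\omega(\mathcal{I}_X)$, which by surjectivity is all of $\mathcal{I}_Y$; hence $\omega(\nulli)$ is the unique minimum $\nulli$ of $\mathcal{I}_Y$. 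This produces the chain $\nulli\leq_Y\omega(i)\leq_Y\omega(j)$, which certifies that the bottom row genuinely realises $\doop(\omega(i))$ and then $\doop(\omega(j))$ as composable interventions on $\mathcal{M}_Y$ and that its left endpoint is the observational distribution $\mathbb{P}_Y=\mathbb{P}_Y^{\nulli}$.

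I expect the main obstacle to be conceptual rather than computational: making precise what the labelled arrows denote, and in particular justifying that the bottom row is a valid sequence of interventions on $\mathcal{M}_Y$ rather than a formal reindexing of $\mathcal{Y}$-distributions. That justification is exactly what order-preservation (together with surjectivity, through $\omega(\nulli)=\nulli$) supplies, which is why the hypothesis $i\leq_X j$ appears in the statement; once the interpretation of the diagram is fixed, the square-wise equalities are immediate from exactness.
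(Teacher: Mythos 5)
Your proof is correct and takes essentially the same route as the paper's: each square commutes by instantiating the defining equation $\tau\bigl(\mathbb{P}_X^{\doop(k)}\bigr)=\mathbb{P}_Y^{\doop(\omega(k))}$ at $k=\nulli,i,j$, and the existence of the bottom-right horizontal arrow is precisely where order-preservation of $\omega$ is used. Your explicit derivation of $\omega(\nulli)=\nulli$ from surjectivity together with order-preservation (via $\nulli$ being the minimum of each intervention poset) is a sound and worthwhile elaboration of a step the paper's proof leaves implicit when it asserts that the left square commutes ``immediately'', but it does not change the argument.
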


\begin{proof}
Let $i,j\in\mathcal{I}_X$ be interventions with $i\leq_X j$.
The commutativity of the left square of the diagram follows immediately from the definition of an exact transformation.
It remains to be shown that the right square of the diagram commutes.
By definition we have that $\tau\left(\mathbb{P}_X^{\doop(i)}\right) = \mathbb{P}_Y^{\doop(\omega(i))}$ and $\tau\left(\mathbb{P}_X^{\doop(j)}\right) = \mathbb{P}_Y^{\doop(\omega(j))}$.
Thus, we only have to show that ${\mathbb{P}_Y^{\doop(\omega(i))} \leq_Y \mathbb{P}_Y^{\doop(\omega(j))}}$ as elements of $\mathcal{P}_Y$, i.\,e.\ that the arrow ${\mathbb{P}_Y^{\doop(\omega(i))} \xrightarrow{\doop(\omega(j))} \mathbb{P}_Y^{\doop(\omega(j))}}$ exists.
This follows from the order-preservingness of $\omega$.
\end{proof}

\subsection{Causal Interpretation of Exact Transformations}\label{subsec:causal-interpretation-transformation}

The notion of an exact transformation between SEMs was motivated by the desire to analyse the correspondence between two causal models describing the same system at different levels of detail.
The purpose of this section is to show that if one SEM can be viewed as an exact transformation of the other, then both can sensibly be thought of as causal models of the same system. In the following, we assume that $\mathcal{M}_Y$ is an exact $\tau$-transformation of $\mathcal{M}_X$ with $\omega$ the corresponding map between interventions.

Surjectivity of $\omega$ ensures that any intervention in $\mathcal{I}_Y$ can be viewed as an $\mathcal{M}_Y$-level representative of some intervention on the $\mathcal{M}_X$-level. Consequently, if $\doop$-interventions on the $\mathcal{M}_X$-level are in correspondence with physical implementations, then surjectivity of $\omega$ ensures that $\doop$-interventions on the $\mathcal{M}_Y$-level have at least one corresponding physical implementation, i.\,e.\ if $\mathcal{M}_X$ is `rooted in reality', then so is $\mathcal{M}_Y$.

Commutativity of the left hand part of the diagram ensures that the effects of interventions are consistently modelled by $\mathcal{M}_X$ and $\mathcal{M}_Y$.
Suppose we want to reason about the effects on the $\mathcal{M}_Y$-level caused by the intervention $j \in \mathcal{I}_Y$.
For example, we may wish to reason about how the temperature and pressure of a volume of gaseous particles is affected by being heated.
We could perform this reasoning by considering any corresponding $\mathcal{M}_X$-level intervention $i \in \omega^{-1}(\{j\})$ and considering the distribution this implies over $\mathcal{Y}$ via $\tau$.
In our example, this would correspond to considering how heating the volume of gas could be modelled by changing the motions of all the gaseous particles and then computing the temperature and pressure of the volume of particles.
Commutativity of the left hand part of the diagram implies that $\mathcal{M}_X$ and $\mathcal{M}_Y$ are consistent in the sense that $\mathcal{M}_Y$ allows us to immediately reason about the effect of the intervention $j\in\mathcal{I}_Y$ while being equivalent to performing the steps above.
That is, we can reason directly about temperature and pressure when heating a volume of gas without having to perform the intermediate steps that involve the microscopic description of the system.

Commutativity of the right hand side of the diagram ensures that once an intervention that fixes a subset of the variables has been performed, we can still consistently reason about the effects of further interventions on the remaining variables in $\mathcal{M}_X$ and $\mathcal{M}_Y$.
Furthermore, it ensures that compositionality of $\doop$-interventions on the $\mathcal{M}_X$-level carries over to the $\mathcal{M}_Y$-level, i.\,e.\ if the intervention $j$ on the $\mathcal{M}_X$-level can be performed additionally to the intervention $i$ in $\mathcal{M}_X$---that is, $i\leq_X j$---, then the same is true of their representations in $\mathcal{M}_Y$.

If $\mathcal{M}_X$ and $\mathcal{M}_Y$ are models of the same system and it has been established that $\mathcal{M}_Y$ is an exact $\tau$-transformation of $\mathcal{M}_X$ for some mapping $\tau$, then the commutativity of the whole diagram in Theorem~\ref{lemma:commuting} ensures that they are causally consistent with one another in the sense described in the preceding paragraphs.
If we wish to reason about the effects of interventions on the $\mathcal{Y}$-variables then it suffices to use the model $\mathcal{M}_Y$, rather than the (possibly more complex) model $\mathcal{M}_X$.
In particular, this means that we can view the $\mathcal{Y}$-variables as causal entities, rather than only functions of underlying `truly' causal entities.
Only if this is the case, causal statements such as `raising temperature increases pressure' or `LDL causes heart disease' are meaningful.

\subsection{What can go wrong when a transformation is not exact?}\label{sec:wrong}

In the previous section we argued that our definition of exact transformations between SEMs is a sensible formalisation of causal consistency.
In this section we will try to give the reader an intuition for why weakening the conditions of our definition would be problematic.
In particular we focus on the requirement that $\omega$ be order-preserving, which we view as one of the core ideas of our paper.

The requirement that $\omega$ be surjective is, as discussed above, required so that all interventions on the $\mathcal{M}_Y$-level have a corresponding intervention on the $\mathcal{M}_X$-level.
If we were to only require that $\omega$ be surjective (but not order-preserving), the observational distribution of $\mathcal{M}_X$ may be mapped to an interventional distribution of $\mathcal{M}_Y$, as illustrated by the following example (cf.\ Figure~\ref{fig:wrong-example} for an illustration).

\begin{figure}
\begin{subfigure}{.45\linewidth}
\center\
\begin{tikzpicture}
\node (x1) at(0,0) {$X_1$};
\node (x2) at(2,0) {$X_2$};
\node (x3) at(1,-1) {$X_3$};

\draw[->] (x1) -- (x3);
\draw[->] (x2) -- (x3);
\draw[dashed] (x1) to [bend left] (x2);
\end{tikzpicture}
\caption{SEM $\mathcal{M}_X$}
\end{subfigure}
\hfill
\begin{subfigure}{.45\linewidth}
\center\
\begin{tikzpicture}
\node (y1) at(0,0) {$Y_1\ {\color{gray}= X_1+X_2}$};
\node (y2) at(-.415,-1) {$Y_2\ {\color{gray}= X_3}$};

\draw[->] (-.88,-.25) -- (-.88,-.75);
\end{tikzpicture}
\caption{SEM $\mathcal{M}_Y$}
\end{subfigure}
\caption{Graphical illustration of parent-child relationships for the examples in Section~\ref{sec:wrong}. The micro-level model $\mathcal{M}_X$ depicted in (a) is to be transformed into the macro-level model $\mathcal{M}_Y$ depicted in (b) which is a coarser descriptions as in it only considers the sum of $X_1$ and $X_2$. In Section~\ref{sec:wrong} we give examples of what can go wrong if the transformation is not exact.}
\label{fig:wrong-example}
\end{figure}
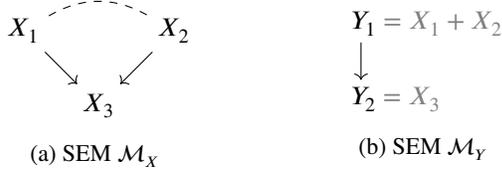

\begin{example}\label{example:wrong1}
Consider the SEM $\mathcal{M}_X=\{\mathcal{S}_X , \mathcal{I}_X, \mathbb{P}_E\}$ over $\mathcal{X}=\mathbb{R}^3$ where
\begin{align*}
\mathcal{S}_X = \big\{ & X_1 = E_1,\ X_2 = E_2,\ X_3 = X_1 + X_2 + E_3 \big\} \\
\mathcal{I}_X = \big\{ & \nulli,\ \doop(X_2=0),\ \doop(X_1=0,\, X_2=0)\big\}, \\
E_1 & \sim \mathbb{P}_{E_1},\ \,  E_2 = - E_1, \  \, E_3 \sim \mathbb{P}_{E_3}
\end{align*}
where $\mathbb{P}_{E_1}$ and $\mathbb{P}_{E_3}$ are arbitrary distributions.
Let ${\tau:\mathcal{X}\to\mathcal{Y}=\mathbb{R}^2}$ be the mapping such that
\begin{align*}
\tau\begin{pmatrix} x_1, x_2, x_3 \end{pmatrix}
= \begin{pmatrix} y_1, y_2\end{pmatrix}
= \begin{pmatrix} x_1 + x_2, x_3\end{pmatrix}
\end{align*}
Let $\mathcal{M}_Y =\{\mathcal{S}_Y , \mathcal{I}_Y, \mathbb{P}_F\}$ be an SEM over $\mathcal{Y}$ with
\begin{align*}
\mathcal{S}_Y = \big\{ & Y_1 = F_1,\ Y_2 = Y_1 + F_2 \big\} \\
\mathcal{I}_Y = \big\{ & \nulli,\ \doop(Y_1=0) \big\}, \\
F_1 \sim \mathbb{P}_{E_1},&  \  \, F_2 \sim \mathbb{P}_{E_3}
\end{align*}
Let ${\omega:\mathcal{I}_X \to \mathcal{I}_Y}$ be defined by
\begin{align*}
\omega: \begin{cases}
\nulli &\mapsto \doop(Y_1=0) \\
\doop(X_2=0) &\mapsto \nulli \\
\doop(X_1=0,\, X_2=0) &\mapsto \doop(Y_1=0) \\
\end{cases}
\end{align*}
Then it is true that ${\mathbb{P}_{\tau(X)}^{i} = \mathbb{P}_Y^{\doop(\omega(i))}}$ for all  ${i \in \mathcal{I}_X }$, while $\omega$ is not order-preserving and $\omega(\nulli)\not = \nulli$.
\end{example}

If the SEMs in the above example were used to model the same system, it would be problematic that the observational setting of $\mathcal{M}_X$---a description of the system when not having physically performed any intervention---would correspond to an interventional setting in $\mathcal{M}_Y$, conversely suggesting that the system \emph{had} been intervened upon.

To avoid the above conflict, we could demand in addition to surjectivity that $\omega$ map the null intervention of $\mathcal{M}_X$ to the null intervention of $\mathcal{M}_Y$.
This additional assumption would ensure commutativity of the left-hand part of the diagram in Theorem~\ref{lemma:commuting}.
However, as the following example shows, this would not ensure that the right-hand part of the diagram commutes for all pairs of interventions ${i \leq_X j}$, since in this case the arrow from $\mathbb{P}_Y^{\doop(\omega(i))}$ to $\mathbb{P}_Y^{\doop(\omega(j))}$ may not exist.\footnote{By definition of the poset $\mathcal{P}_Y$, this arrow exists if and only if $\omega(i) \leq_Y \omega(j)$.}

\begin{example}\label{example:wrong2}
Let $\mathcal{X},\mathcal{Y}$ and $\tau$ be as in Example~\ref{example:wrong1}. Consider the SEM $\mathcal{M}_X=\{\mathcal{S}_X , \mathcal{I}_X, \mathbb{P}_E\}$ where
\begin{align*}
\mathcal{S}_X = \big\{ & X_1 = E_1,\ X_2 = E_2, \ X_3 = X_1 + X_2 + E_3 \big\} \\
\mathcal{I}_X = \big\{ & \nulli,\ \doop(X_2=0),\ \doop(X_1=0,\, X_2=0)\big\}, \\
E_1 & = 1,\ \,  E_2 \sim \mathbb{P}_{E_2}, \  \, E_3 \sim \mathbb{P}_{E_3}
\end{align*}
where $\mathbb{P}_{E_2}$ and $\mathbb{P}_{E_3}$ are arbitrary distributions.
Let $\mathcal{M}_Y =\{\mathcal{S}_Y , \mathcal{I}_Y, \mathbb{P}_F\}$ be the SEM over $\mathcal{Y}$ with
\begin{align*}
\mathcal{S}_Y = \big\{ & Y_1 =1+ F_1,\ Y_2 = Y_1 + F_2 \big\} \\
\mathcal{I}_Y = \big\{ & \nulli,\ \doop(Y_1=0),\ \doop(Y_1=1) \big\}, \\
F_1 &\sim \mathbb{P}_{E_2},  \  \, F_2 \sim \mathbb{P}_{E_3}
\end{align*}
Let ${\omega:\mathcal{I}_X \to \mathcal{I}_Y}$ be defined by
\begin{align*}
\omega: \begin{cases}
\nulli &\mapsto \nulli  \\
\doop(X_2=0) &\mapsto \doop(Y_1=1) \\
\doop(X_1=0,\, X_2=0) &\mapsto \doop(Y_1=0) \\
\end{cases}
\end{align*}
Then it is true that ${\mathbb{P}_{\tau(X)}^{i} = \mathbb{P}_Y^{\doop(\omega(i))}}$ for all ${i \in \mathcal{I}_X}$ and $\omega(\nulli)=\nulli$, although $\omega$ is not order-preserving.
\end{example}

If the above SEMs were used as models of the same system, they would not suffer from the problem illustrated in Example~\ref{example:wrong1}.
Suppose now, however, that we have performed the intervention $\doop(X_2=0)$ in $\mathcal{M}_X$, corresponding to the intervention $\doop(Y_1=1)$ in $\mathcal{M}_Y$.
If we wish to reason about the effect of the intervention $\doop(X_1=0,\, X_2=0)$ in $\mathcal{M}_X$, we run into a problem.
$\mathcal{M}_X$ suggests that $\doop(X_1=0,\, X_2=0)$ could be implemented by performing an additional action on top of $\doop(X_2=0)$.
In contrast, $\mathcal{M}_Y$ suggests that implementing the corresponding intervention $\doop(Y_1=0)$ would conflict with the already performed intervention $\doop(Y_1=1)$.

\section{Examples of exact transformations}\label{sec:example-transformations}

In the introduction we motivated the problem considered in this paper by listing three settings in which differing model levels naturally occur.
Having now introduced the notion of an exact transformation between SEMs, we provide in this section examples of exact transformations falling into each of these categories.
The fact that a single framework can be used to draw an explicit correspondence between differing model levels in each of these settings demonstrates the generality of our framework.

Observe that in each of the following examples, the particular set of interventions considered is important. If we were to allow larger sets of interventions $\mathcal{I}_X$ in the SEM $\mathcal{M}_X$, the transformations given would not be exact. This highlights the importance to the causal modelling process of carefully considering the set of interventions. All proofs are found in  the Appendix.

\subsection{Marginalisation of variables}\label{sec:basic_trafos}

In the following two Theorems we consider two operations that can be performed on SEMs, namely marginalisation of childless or non-intervened variables, and prove that these are exact transformations.
That is, an SEM can be simplified into an SEM with fewer variables by either of these operations without losing any causal content concerning the remaining variables.

Thus if the SEM $\mathcal{M}_Y$ can be obtained from another SEM $\mathcal{M}_X$ by successively performing the operations in the following theorems, then $\mathcal{M}_Y$ is an exact transformation of $\mathcal{M}_X$ and hence the two models are causally consistent.
This formally explains why we can sensibly consider causal models that focus on a subsystem $\mathcal{M}_Y$ of a more complex system $\mathcal{M}_X$ (cf.\ Figure~\ref{fig:SEM_marginalisation}).
For a measure-theoretic treatment of marginalisation in SEMs, see~\cite{bongers2016structural}.

\begin{theorem}[Marginalisation of childless variables]\label{theorem:childless}
Let $\mathcal{M}_X=(\mathcal{S}_X,\mathcal{I}_X,\mathbb{P}_E)$ be an SEM and suppose that ${\mathbb{I}_Z\subset\mathbb{I}_X}$ is a set of indices of variables with no children, i.\,e.\ if $i\in\mathbb{I}_Z$ then $X_i$ does not appear in the right-hand side of any structural equation in $\mathcal{S}_X$.
Let $\mathcal{Y}$ be the set in which $Y = \left( X_i: i\in\mathbb{I}_X\setminus \mathbb{I}_Z \right)$ takes value.
Then the transformation $\tau: \mathcal{X} \to \mathcal{Y}$ mapping
\begin{align*}
   \tau: \left( x_i: i\in\mathbb{I}_X \right) = x &\mapsto y = \left( x_i: i\in\mathbb{I}_X\setminus \mathbb{I}_Z \right)
\end{align*}
naturally gives rise to an SEM $\mathcal{M}_Y$ that is an exact $\tau$-transformation of $\mathcal{M}_X$, corresponding to marginalising out the childless variables $X_i$ for $i\in\mathbb{I}_Z$.
\end{theorem}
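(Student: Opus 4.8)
The plan is to construct the candidate SEM $\mathcal{M}_Y = (\mathcal{S}_Y, \mathcal{I}_Y, \mathbb{P}_F)$ together with the intervention map $\omega$, and then verify the three requirements of Definition~\ref{def:exacttrafos}: that $\omega$ is surjective and order-preserving, that $\mathcal{M}_Y$ is a legitimate SEM, and that the pushforward identity $\mathbb{P}_{\tau(X)}^{i} = \mathbb{P}_Y^{\doop(\omega(i))}$ holds. For the construction I would take $\mathcal{S}_Y$ to be the structural equations of $\mathcal{S}_X$ indexed by $\mathbb{I}_X \setminus \mathbb{I}_Z$, that is $Y_i = f_i(Y, E_i)$ for $i \in \mathbb{I}_X \setminus \mathbb{I}_Z$. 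The point that makes this well-posed is the childlessness hypothesis: since no $X_j$ with $j \in \mathbb{I}_Z$ appears on any right-hand side, each $f_i$ with $i \notin \mathbb{I}_Z$ depends only on the $Y$-variables and on $E_i$, so these equations form a self-contained subsystem on $\mathcal{Y}$. I would set $\mathbb{P}_F$ to be the marginal of $\mathbb{P}_E$ on the coordinates $F = (E_i : i \in \mathbb{I}_X \setminus \mathbb{I}_Z)$, define $\omega$ to send an intervention $i \in \mathcal{I}_X$ fixing a set $S$ of variables to the intervention fixing only the coordinates in $S \cap (\mathbb{I}_X \setminus \mathbb{I}_Z)$ to the same values (discarding any intervention on childless variables), and take $\mathcal{I}_Y := \omega(\mathcal{I}_X)$.

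With $\mathcal{I}_Y$ defined as the image of $\omega$, surjectivity is immediate and $\omega(\nulli) = \nulli$. Order-preservation follows because if $i \leq_X j$ then $i$ fixes a subset of the coordinates that $j$ fixes with matching values; intersecting both coordinate sets with $\mathbb{I}_X \setminus \mathbb{I}_Z$ preserves both the inclusion and the matching values, so $\omega(i) \leq_Y \omega(j)$. To see that $\mathcal{M}_Y$ is a valid SEM I would use a decoupling argument: for any draw $E = e$ and any intervention $i$, the intervened equations of $\mathcal{M}_X$ split into the closed $Y$-subsystem (unaffected by interventions on $\mathbb{I}_Z$, since those only overwrite $Z$-equations, and unaffected by the $Z$-variables themselves by childlessness) and the remaining equations that determine the $Z$-variables as mere functions of the already-solved $Y$-values. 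Hence a solution of the intervened $\mathcal{M}_X$ exists and is unique if and only if the $Y$-subsystem has a unique solution; since $\mathcal{M}_X$ is well-defined under every $i \in \mathcal{I}_X$ and $\omega$ is surjective, $\mathcal{M}_Y$ is well-defined under every intervention in $\mathcal{I}_Y$.

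The pushforward identity is the same decoupling observation at the level of distributions. Fixing $i \in \mathcal{I}_X$, for $\mathbb{P}_E$-almost every $e$ the unique solution $x$ of the intervened $\mathcal{M}_X$ has its $Y$-block $\tau(x) = (x_k : k \in \mathbb{I}_X \setminus \mathbb{I}_Z)$ equal to the unique solution of the intervened $\mathcal{M}_Y$ run with exogenous value $F = (e_k : k \in \mathbb{I}_X \setminus \mathbb{I}_Z)$, because the two intervened subsystems are literally the same equations with the same fixed values. Since $\mathbb{P}_F$ is the corresponding marginal of $\mathbb{P}_E$, pushing $\mathbb{P}_X^{\doop(i)}$ forward through $\tau$ yields exactly $\mathbb{P}_Y^{\doop(\omega(i))}$, which is the required identity.

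I expect the main obstacle to be careful bookkeeping rather than any deep difficulty. One has to argue cleanly that interventions on childless variables are genuinely invisible to the $Y$-subsystem---both in the unintervened equations (via childlessness) and in the intervened ones (since such interventions only replace $Z$-equations)---and that existence and uniqueness of the full $\mathcal{M}_X$-solution transfer to the subsystem under the decoupling. Making the ``almost every $e$'' statements precise, so that the two unique solutions agree on a common probability-one event and the marginalisation of $\mathbb{P}_E$ matches the pushforward, is where the proof needs to be written with the most care.
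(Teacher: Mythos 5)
Your proposal is correct and follows essentially the same route as the paper: drop the structural equations of the childless variables, take the marginal of $\mathbb{P}_E$ on the remaining exogenous coordinates, let $\omega$ discard the components of an intervention that act on childless variables, and observe that by childlessness the remaining equations form a closed subsystem so that marginalising equals dropping. The only (cosmetic) difference is that the paper invokes transitivity of exact transformations to reduce to removing a single childless variable at a time, whereas you remove the whole set $\mathbb{I}_Z$ in one step and are somewhat more explicit about the unique-solvability and almost-sure bookkeeping.
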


\begin{theorem}[Marginalisation of non-intervened variables]\label{theorem:never_intervened}
Let $\mathcal{M}_X=(\mathcal{S}_X,\mathcal{I}_X,\mathbb{P}_E)$ be an \emph{acyclic} SEM and suppose that ${\mathbb{I}_Z\subset\mathbb{I}_X}$ is a set of indices of variables that are not intervened upon by any intervention $i\in\mathcal{I}_X$.
Let $\mathcal{Y}$ be the set in which $Y = \left( X_i: i\in\mathbb{I}_X\setminus \mathbb{I}_Z \right)$ takes value.
Then the transformation $\tau: \mathcal{X} \to \mathcal{Y}$ mapping
\begin{align*}
   \tau: \left( x_i: i\in\mathbb{I}_X \right) = x &\mapsto y = \left( x_i: i\in\mathbb{I}_X\setminus \mathbb{I}_Z \right)
\end{align*}
naturally gives rise to an SEM $\mathcal{M}_Y$ that is an exact $\tau$-transformation of $\mathcal{M}_X$, corresponding to marginalising out the never-intervened-upon variables $X_i$ for $i\in\mathbb{I}_Z$.
\end{theorem}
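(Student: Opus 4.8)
The plan is to construct the marginal SEM $\mathcal{M}_Y$ explicitly by eliminating the never-intervened variables through back-substitution, and then to verify the three requirements of Definition~\ref{def:exacttrafos}: existence of a surjective, order-preserving $\omega$, and the push-forward identity. Write $\mathbb{I}_Y := \mathbb{I}_X \setminus \mathbb{I}_Z$. Since $\mathcal{M}_X$ is acyclic, I would fix a topological ordering of the variables and, proceeding along it, repeatedly replace every occurrence of a variable $X_k$ with $k \in \mathbb{I}_Z$ on the right-hand side of a structural equation by its defining expression $f_k(X, E_k)$. Acyclicity guarantees this terminates and that no $\mathbb{I}_Z$-index survives, yielding for each $i \in \mathbb{I}_Y$ a reduced equation $X_i = \tilde f_i(Y, F_i)$ whose right-hand side mentions only $\mathbb{I}_Y$-variables together with a bundle $F_i$ of the exogenous variables dragged in by substitution (including $E_i$). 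I would set $g_i := \tilde f_i$, take $F = (F_i : i \in \mathbb{I}_Y)$, and let $\mathbb{P}_F$ be the push-forward of $\mathbb{P}_E$ under the bundling map $e \mapsto (F_i(e))_i$. Crucially, the paper's relaxation of the independent-exogenous-variables assumption is what makes this legitimate: a single $E_k$ with $k \in \mathbb{I}_Z$ may be dragged into several children and so appear in several $F_i$, rendering them dependent.

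Second, I would define $\omega$. Because no intervention in $\mathcal{I}_X$ touches an $\mathbb{I}_Z$-variable, every $i \in \mathcal{I}_X$ is an assignment to a subset of $\mathbb{I}_Y$-variables; I map it to the identical assignment regarded as an intervention on $\mathcal{M}_Y$, and simply set $\mathcal{I}_Y := \omega(\mathcal{I}_X)$. Surjectivity is then immediate, and since $\omega$ acts as the identity on the underlying do-assignments it is trivially order-preserving (in fact bijective here, which is harmless). I would also record that the reduced graph on $\mathbb{I}_Y$ inherits the topological order of $\mathcal{M}_X$, so back-substitution introduces no cycles; hence $\mathcal{M}_Y$ is acyclic and therefore admits a unique solution almost surely under each $\omega(i)$, confirming it is a bona fide SEM.

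The core of the argument, and the step I expect to require the most care, is the push-forward identity $\mathbb{P}_{\tau(X)}^{i} = \mathbb{P}_Y^{\doop(\omega(i))}$. Fixing $i \in \mathcal{I}_X$, let $\Phi_i$ denote the almost-surely-unique solution map $e \mapsto x$ of $\mathcal{M}_X$ under intervention $i$, and $\Psi_i$ the solution map $f \mapsto y$ of $\mathcal{M}_Y$ under $\omega(i)$. I would prove, by induction along the shared topological order, that $\Psi_i$ evaluated at the bundled noise $(F_j(e))_j$ agrees with the $\mathbb{I}_Y$-coordinates of $\Phi_i(e)$ for almost every $e$. This is exactly where the hypothesis that $\mathbb{I}_Z$ is never intervened upon is indispensable: under intervention $i$ the equations overwritten by constants are the same in both systems (they lie in $\mathbb{I}_Y$), while every $\mathbb{I}_Z$-equation is left intact in $\mathcal{M}_X$ and has been faithfully substituted into the reduced equations of $\mathcal{M}_Y$. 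Had an intervention set an $\mathbb{I}_Z$-variable to a constant, that constant would replace the substituted expression and the two solutions would diverge. Given the coordinate-wise equality of $\Phi_i$ and $\Psi_i$ on $\mathbb{I}_Y$, pushing $\mathbb{P}_E$ forward yields $\tau(\mathbb{P}_X^{\doop(i)}) = \mathbb{P}_Y^{\doop(\omega(i))}$, which is the claim.

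Finally, I would remark that the only genuine subtlety beyond bookkeeping is ensuring the substitution is well-founded: for finite $\mathbb{I}_X$ this is immediate from the topological order, and for infinite index sets one appeals to the well-foundedness implied by acyclicity. With the construction and this commutation-with-interventions lemma in hand, all three conditions of Definition~\ref{def:exacttrafos} are verified and $\mathcal{M}_Y$ is an exact $\tau$-transformation of $\mathcal{M}_X$.
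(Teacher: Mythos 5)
Your proposal is correct and follows essentially the same route as the paper: substitute the defining expressions of the never-intervened variables into their children's equations, bundle the dragged-in exogenous variables into dependent noise terms (exploiting the paper's relaxation of exogenous independence), and take $\omega$ to be the identity on do-assignments. The only cosmetic differences are that the paper marginalises one variable at a time and invokes transitivity of exact transformations, whereas you eliminate all of $\mathbb{I}_Z$ simultaneously along a topological order, and that your explicit induction verifying the push-forward identity fills in a step the paper leaves as ``by construction''.
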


The assumption of acyclicity made in Theorem~\ref{theorem:never_intervened} can be relaxed to allow marginalisation of non-intervened variables in cyclic SEMs, at the expense of extra technical conditions (see Section~3 of \cite{bongers2016structural}).

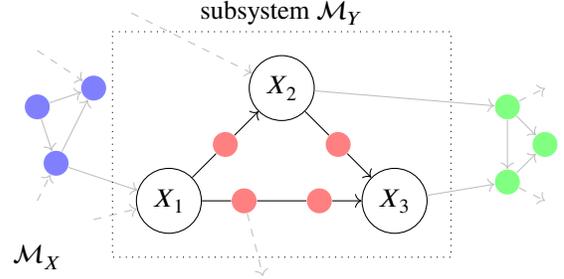
\begin{figure}
\center\
\begin{tikzpicture}

\node[keep] (X1) at(0,0) {$X_1$};
\node[keep] (X2) at(1.5,1.5) {$X_2$};
\node[keep] (X3) at(3,0) {$X_3$};

\node[drop1] (d1) at(4.5,.25) {};
\node[drop1] (d2) at(4.5,1.25) {};
\node[drop1] (d3) at(5,.75) {};

\draw[->,docolour] (X3) -- (d1);
\draw[->,docolour] (X2) -- (d2);
\draw[->,docolour] (d2) -- (d1);
\draw[->,docolour] (d1) -- (d3);
\draw[->,docolour] (d2) -- (d3);
\draw[->,docolour,dashed] (d1) -- (5,0);
\draw[->,docolour,dashed] (d2) -- (5,1.5);

\node[drop2a] (i12) at(.75,.75) {};
\node[drop2a] (i23) at(2.25,.75) {};
\node[drop2a] (i13a) at(1,0) {};
\node[drop2a] (i13b) at(2,0) {};

\draw[->] (X1) -- (i12) -- (X2);
\draw[->] (X2) -- (i23) -- (X3);
\draw[->] (X1) -- (i13a) -- (i13b) -- (X3);
\draw[->,docolour,dashed] (i13a) -- (1.25,-1);

\node[drop2b] (n1) at(-1.5,.5) {};
\node[drop2b] (n2) at(-1,1.5) {};
\node[drop2b] (n3) at(-1.75,1.25) {};

\draw[->,docolour] (n1) -- (X1);
\draw[->,docolour] (n3) -- (n1);
\draw[->,docolour] (n1) -- (n2);
\draw[->,docolour] (n3) -- (n2);
\draw[->,docolour,dashed] (-1.75,0) -- (n1);
\draw[->,docolour,dashed] (-1.75,2) -- (n2);
\draw[->,docolour,dashed] (-1,-.25) -- (X1);
\draw[->,docolour,dashed] (-.5,2.5) -- (X2);

\draw[dotted] (-.75,2.25) -- (3.75,2.25) -- (3.75,-.75) -- (-.75,-.75) -- (-.75,2.25);
\node (label) at(1.5,2.5) {subsystem $\mathcal{M}_Y$};
\node (label2) at(-1.75,-.75) {$\mathcal{M}_X$};

\end{tikzpicture}
\caption{Suppose that there is a complex model $\mathcal{M}_X$ but that we only wish to model the distribution over $X_1,X_2,X_3$ and how it changes under some interventions on $X_1,X_2,X_3$.
By Theorem~\ref{theorem:childless}, we can ignore downstream effects (\dropNode{drop1}) after grouping them together as one multivariate variable and by Theorem~\ref{theorem:never_intervened} we can ignore intermediate steps of complex mechanisms (\dropNode{drop2a}) and treat upstream causes as noise fluctuations (\dropNode{drop2b}).
That is, we can exactly transform the complex SEM $\mathcal{M}_X$ into a simpler model $\mathcal{M}_Y$ by marginalisation.
}
\label{fig:SEM_marginalisation}
\end{figure}

We remind the reader that our definition of an SEM does not require that the exogenous $E$-variables be independent.
Theorem~\ref{theorem:never_intervened} would not hold if this restriction were made (which is usually the case in the literature); marginalising out a common parent node will in general result in its children having dependent exogenous variables.

\subsection{Micro- to macro-level}\label{subsection:micromacro}

Transformations from micro- to macro-levels may arise in situations in which the micro-level variables can be observed via a `coarse' measurement device, represented by the function $\tau$, e.\,g.\ we can use a thermometer to measure the temperature of a gas, but not the motions of the individual particles. They may also arise due to deliberate modelling choice when we wish to describe a system using higher level features, e.\,g.\ viewing the motor cortex as a single entity responsible for movements, rather than as a collection of individual neurons.

In such situations, our framework of exact transformations allows one to investigate whether such a macro-level model admits a causal interpretation. The following theorem provides an exact transformation between a micro-level model $\mathcal{M}_X$ and a macro-level model $\mathcal{M}_Y$ in which the variables are aggregate features of variables in $\mathcal{M}_X$ obtained by averaging (cf.\ Figure~\ref{fig:micro_macro}).

\begin{theorem}[Micro- to macro-level]\label{theorem:micro-macro}
Let ${\mathcal{M}_X = \left(\mathcal{S}_X, \mathcal{I}_X, \mathbb{P}_{E,F} \right)}$ be a linear SEM over the variables ${W=\left( W_i \: : \: 1\leq  i \leq n \right)}$ and ${Z=\left( Z_i \: : \: 1\leq  i \leq m \right)}$ with
\begin{align*}
\mathcal{S}_X &= \left\lbrace W_i = E_i \:  : \: 1 \leq i \leq n  \right\rbrace \\
& \quad \ \  \cup \left\lbrace Z_i = \sum_{j=1}^n A_{ij}W_j  + F_{i} \:  : \: 1\leq i \leq m \right\rbrace \\
\mathcal{I}_X &= \Big{\{} \nulli, \ \doop(W= w), \ \doop(Z= z), \\
& \qquad\ \doop(W= w, Z= z ) :   w \in \mathbb{R}^{n}, \, z \in \mathbb{R}^m \Big{\}}
\end{align*}
and $(E,F)  \sim \mathbb{P}$ where $\mathbb{P}$ is any distribution over $\mathbb{R}^{n+m}$ and $A$ is a matrix.

Assume that there exists an $a\in \mathbb{R}$ such that each column of $A$ sums to $a$. Consider the following transformation that averages the $W$ and $Z$ variables:
\begin{align*}
\tau : \mathcal{X} &\rightarrow \mathcal{Y} = \mathbb{R}^2 \\
\begin{pmatrix} W \\ Z \end{pmatrix} &\mapsto \begin{pmatrix} \widehat{W} \\ \widehat{Z} \end{pmatrix} = \begin{pmatrix} \frac{1}{n}\sum_{i=1}^n W_i \\ \frac{1}{m}\sum_{j=1}^m Z_j  \end{pmatrix}
\end{align*}
Further, let $\mathcal{M}_Y = \left(\mathcal{S}_Y, \mathcal{I}_Y, \mathbb{P}_{\widehat{E},\widehat{F}} \right)$ over the variables ${\left\lbrace \widehat{W}, \widehat{Z} \right\rbrace}$ be an SEM with
\begin{align*}
\mathcal{S}_Y &= \Big\lbrace \widehat{W} = \widehat{E}, \ \widehat{Z} = \frac{a}{m}\widehat{W} + \widehat{F} \Big\rbrace \\
\mathcal{I}_Y &= \Big{\{} \nulli,\ \doop(\widehat{W}= \widehat{w}), \ \doop(\widehat{Z}= \widehat{z}), \\
& \qquad\ \doop(\widehat{W}= \widehat{w}, \widehat{Z}= \widehat{z} ) :   \widehat{w} \in \mathbb{R}, \, \widehat{z} \in \mathbb{R} \Big{\}} \\
\widehat{E}  & \sim \frac{1}{n}\sum_{i=1}^{n} E_i, \quad
\widehat{F}  \sim \frac{1}{m}\sum_{i=1}^{m} F_i
\end{align*}

Then $\mathcal{M}_Y$ is an exact $\tau$-transformation of $\mathcal{M}_X$.
\end{theorem}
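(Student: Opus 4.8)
The plan is to instantiate Definition~\ref{def:exacttrafos} directly: I must produce a surjective, order-preserving map $\omega:\mathcal{I}_X\to\mathcal{I}_Y$ and then verify the distributional identity $\mathbb{P}_{\tau(X)}^{i}=\mathbb{P}_Y^{\doop(\omega(i))}$ for every $i\in\mathcal{I}_X$. The natural candidate sends each micro-intervention to the macro-intervention fixing the corresponding averages: writing $\bar w=\tfrac1n\sum_i w_i$ and $\bar z=\tfrac1m\sum_i z_i$, set $\omega(\nulli)=\nulli$, $\omega(\doop(W=w))=\doop(\widehat W=\bar w)$, $\omega(\doop(Z=z))=\doop(\widehat Z=\bar z)$, and $\omega(\doop(W=w,Z=z))=\doop(\widehat W=\bar w,\widehat Z=\bar z)$.

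First I would dispatch the two structural properties of $\omega$, which are bookkeeping. Surjectivity holds because every macro target value is attained, e.g.\ the constant vector $w=(\hat w,\dots,\hat w)$ gives $\bar w=\hat w$, so each of the four families in $\mathcal{I}_Y$ lies in the image. Order-preservation holds because $\omega$ carries a $W$-intervention to a $\widehat W$-intervention and a $Z$-intervention to a $\widehat Z$-intervention, so the only nontrivial comparabilities in $\mathcal{I}_X$—namely $\nulli\leq_X(\cdot)$ and $\doop(W=w)\leq_X\doop(W=w,Z=z)$, $\doop(Z=z)\leq_X\doop(W=w,Z=z)$—are sent to the analogous comparabilities in $\mathcal{I}_Y$ (the shared value $\bar w$, resp.\ $\bar z$, is preserved). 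I would also note in passing that $\mathcal{M}_Y$ is a well-defined acyclic two-node SEM, so each intervention in $\mathcal{I}_Y$ admits a unique solution.

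The substance is the distributional identity, checked case by case using linearity and the column-sum hypothesis $\sum_{i=1}^m A_{ij}=a$. The governing computation is that, under any intervention,
\[
\widehat Z=\frac1m\sum_{i=1}^m Z_i=\frac1m\sum_{j=1}^n\Bigl(\sum_{i=1}^m A_{ij}\Bigr)W_j+\widehat F=\frac{a}{m}\sum_{j=1}^n W_j+\widehat F,
\]
so the averaged response $\widehat Z$ depends on the micro-state only through the aggregate $\sum_j W_j$ (a deterministic multiple of $\widehat W$) and on the noise only through $\widehat F=\tfrac1m\sum_i F_i$; this is exactly the affine form posited for $\widehat Z$ in $\mathcal{S}_Y$, and it is what makes the macro model autonomous. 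For the observational case $\nulli$, combining this with $\widehat W=\tfrac1n\sum_i E_i=\widehat E$ shows that $\tau(X)$ has precisely the law generated by $\mathcal{S}_Y$. For $\doop(W=w)$ the same identity gives $\widehat W=\bar w$ deterministically and $\widehat Z=\tfrac{a}{m}\sum_j w_j+\widehat F$, which depends on $w$ only through $\bar w$—this reproduces $\mathbb{P}_Y^{\doop(\widehat W=\bar w)}$ and simultaneously makes the many-to-one behaviour of $\omega$ consistent. The cases $\doop(Z=z)$ and $\doop(W=w,Z=z)$ are then immediate, since fixing the $Z_i$ turns $\widehat Z=\bar z$ into a point mass while $\widehat W$ is either left as $\widehat E$ or set to $\bar w$.

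I expect the main obstacle to be not the algebra but the correct handling of the noise. Because $E$ and $F$ are \emph{not} assumed independent, the identity $\mathbb{P}_{\tau(X)}^{i}=\mathbb{P}_Y^{\doop(\omega(i))}$ forces $(\widehat E,\widehat F)$ to carry the full \emph{joint} law of $\bigl(\tfrac1n\sum_i E_i,\tfrac1m\sum_i F_i\bigr)$ under $\mathbb{P}_{E,F}$, not merely the two marginals written in the statement. I would therefore read $\mathbb{P}_{\widehat E,\widehat F}$ as this pushforward and, in each case, verify agreement of the \emph{joint} distribution of $(\widehat W,\widehat Z)$ rather than of each coordinate separately. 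The column-sum hypothesis is precisely the structural condition that makes this joint law reproducible in the macro model by a single linear coefficient (the common column sum $a$, suitably normalised) acting on $\widehat W$ together with the aggregated noise $\widehat F$; without it, $\widehat Z$ would retain genuine dependence on the individual $W_j$ and no two-variable macro SEM could match $\mathcal{P}_{\tau(X)}$.
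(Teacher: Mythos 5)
Your proposal is correct and follows essentially the same route as the paper's proof: the identical map $\omega$ sending each micro-intervention to the intervention fixing the corresponding averages, the same surjectivity and order-preservation checks, and the same case-by-case verification that the column-sum condition collapses $\frac{1}{m}\sum_i\sum_j A_{ij}W_j$ to a deterministic multiple of $\widehat{W}$ plus the aggregated noise. Your closing remark that $(\widehat{E},\widehat{F})$ must be read as the joint pushforward of $\bigl(\tfrac1n\sum_i E_i,\tfrac1m\sum_i F_i\bigr)$ rather than two marginals is a point the paper's proof glosses over, and is a worthwhile refinement given that $E$ and $F$ need not be independent.
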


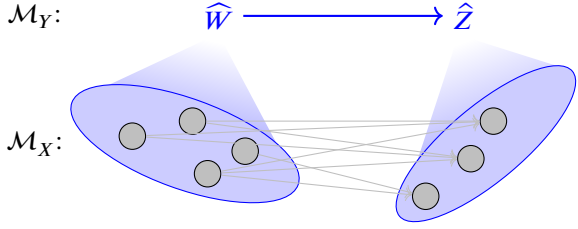
\begin{figure}
\center\
\begin{tikzpicture}
\node[ellipse, draw,blue, fill=blue!20, minimum width=3.2cm, minimum height=1.2cm,rotate=-20] (e1) at (0,0) {};
\node[ellipse, draw,blue, fill=blue!20, minimum width=3cm, minimum height=1cm,rotate=40] (e2) at (4,0) {};

\node[micro] (X1) at(-.7,.1) {};
\node[micro] (X2) at(0.1,0.3) {};
\node[micro] (X3) at(.3,-0.4) {};
\node[micro] (X4) at(.8,-0.1) {};

\node[micro] (Z1) at(4.1,.3) {};
\node[micro] (Z2) at(3.8,-.2) {};
\node[micro] (Z3) at(3.2,-.7) {};

\draw[->,docolour] (X1) -- (Z1);
\draw[->,docolour] (X1) -- (Z2);
\draw[->,docolour] (X2) -- (Z1);
\draw[->,docolour] (X2) -- (Z2);
\draw[->,docolour] (X3) -- (Z1);
\draw[->,docolour] (X3) -- (Z2);
\draw[->,docolour] (X3) -- (Z3);
\draw[->,docolour] (X4) -- (Z3);

\node[blue] (label1) at(.45,1.7) {$\widehat{W}$};
\node[blue] (label2) at(3.7,1.7) {$\widehat{Z}$};

\node (MY) at(-2,1.7) {$\mathcal{M}_Y$:};
\node (MX) at(-2,0) {$\mathcal{M}_X$:};

\begin{pgfonlayer}{background}
\shade[bottom color=white,top color=blue!50!white,shading angle=160] (e1.0)--(label1.275)--(label1.245)--(e1.180)--cycle;
\shade[bottom color=white,top color=blue!50!white,shading angle=200] (e2.0)--(label2.285)--(label2.255)--(e2.180)--cycle;
\end{pgfonlayer}

\draw[->,blue,thick] (label1) -- (label2) ;

\end{tikzpicture}
\caption{ An illustration of the setting considered in Theorem~\ref{theorem:micro-macro}. The micro-variables $W_1,\ldots,W_n$ and $Z_1,\ldots,Z_m$ in the SEM $\mathcal{M}_X$ can be averaged to derive macro-variables $\widehat{W}$ and $\widehat{Z}$ in such a way that the resulting macro-level SEM $\mathcal{M}_Y$ is an exact transformation of the micro-level SEM $\mathcal{M}_X$.}
\label{fig:micro_macro}
\end{figure}

\subsection{Stationary behaviour of dynamical processes}\label{subsec:stationary}

In this section we provide an example of an exact transformation between an SEM $\mathcal{M}_X$ describing a time-evolving system and another SEM $\mathcal{M}_Y$ describing the system after it has equilibrated. In this setting, $\tau$ could be thought of as representing our ability to only measure the time-evolving system at a single point in time, after the transient dynamics have taken place.

In particular, we consider a discrete-time linear dynamical system with identical noise and provide the explicit form of an SEM that models the distribution of the equilibria under each intervention (cf.\ Figure~\ref{fig:stationary}).%
\footnote{Note that the assumption that the transition dynamics be linear can be relaxed to more general non-linear mappings. In this case, however, the structural equations of $\mathcal{M}_Y$ can only be written in terms of implicit solutions to the structural equations of $\mathcal{M}_X$. For purposes of exposition, we stick here to the simpler case of linear dynamics.}

\begin{theorem}[Discrete-time linear dynamical process with identical noise]\label{theorem:identical}
Let $\mathcal{M}_X = \left(\mathcal{S}_X, \mathcal{I}_X, \mathbb{P}_{E} \right)$ over the variables ${\left\lbrace X_t^i \: : \: t \in \mathbb{Z}, \: i\in \{1,\ldots,n\} \right\rbrace}$ be a linear SEM with
\begin{align*}
\mathcal{S}_X &= \resizebox{.9\linewidth}{!}{$\displaystyle\left\lbrace X_{t+1}^i = \sum_{j=1}^n A_{ij}X_t^j + E_t^i \:  : \: i \in \{1,\ldots,n\}, t\in\mathbb{Z} \right\rbrace$} \\
&\qquad\text{i.\,e.}\ X_{t+1} = AX_t + E_t \\
\mathcal{I}_X &= \resizebox{0.43\textwidth}{!}{$\Big{\{} \doop(X_t^j= x_j \enspace \forall t \in \mathbb{Z},\forall j \in J):  x \in \mathbb{R}^{|J|}, \: J \subseteq \{1,\ldots,n\} \Big{\}} $}\\
E_t &= E\ \forall t\in\mathbb{Z} \text{ where } E \sim \mathbb{P}
\end{align*}
where $\mathbb{P}$ is any distribution over $\mathbb{R}^n$ and $A$ is a matrix.

Assume that the linear mapping $v\mapsto Av$ is a contraction.
Then the following transformation is well-defined under any intervention $i\in\mathcal{I}_X$:\footnote{In Appendix~\ref{contractionmapping:appendix} we show that $A$ being a contraction mapping ensures that the sequence $(X_t)_{t\in\mathbb{Z}}$ defined by $\mathcal{M}_X$ converges everywhere under any intervention $i\in\mathcal{I}_X$. That is, for any realisation $(x_t)_{t\in\mathbb{Z}}$ of this sequence, its limit $\lim_{t\rightarrow \infty}x_t$ as a sequence of elements of $\mathbb{R}^n$ exists.}
\begin{align*}
\tau : \mathcal{X} &\rightarrow \mathcal{Y} \\
(x_t)_{t\in \mathbb{Z}} & \mapsto y= \lim_{t\rightarrow \infty} x_t
\end{align*}
Let ${\mathcal{M}_Y = \left(\mathcal{S}_Y, \mathcal{I}_Y, \mathbb{P}_{F} \right)}$ be the (potentially cyclic) SEM over the variables ${\left\lbrace Y^i \: :  \: i\in \{1,\ldots,n\} \right\rbrace}$  with
\begin{align*}
\mathcal{S}_Y &= \left\lbrace Y^i = \frac{\sum_{j\not=i} A_{ij}Y^j}{1-A_{ii}} + \frac{F^i}{1-A_{ii}} \:  : \: i \in \{1,\ldots,n\} \right\rbrace \\
\mathcal{I}_Y &= \resizebox{.9\linewidth}{!}{$\displaystyle\Big{\{} \doop(Y^j= y_j \ \forall j \in J) : y \in \mathbb{R}^{|J|}, \: J \subseteq \{1,\ldots,n\} \Big{\}}$} \\
F &\sim \mathbb{P}
\end{align*}
Then $\mathcal{M}_Y$ is an exact $\tau$-transformation of $\mathcal{M}_X$.
\end{theorem}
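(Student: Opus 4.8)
The plan is to exhibit the intervention map $\omega$ explicitly and then reduce the required distributional identity to the single observation that the structural equations of $\mathcal{M}_Y$ are exactly the equilibrium (fixed-point) equations of the dynamics in $\mathcal{M}_X$. Both $\mathcal{I}_X$ and $\mathcal{I}_Y$ are parametrised by a pair $(J,x)$ with $J\subseteq\{1,\ldots,n\}$ and $x\in\mathbb{R}^{|J|}$: in $\mathcal{M}_X$ the index $(J,x)$ fixes $X_t^j=x_j$ for all $t$ and all $j\in J$, while in $\mathcal{M}_Y$ it fixes $Y^j=x_j$ for $j\in J$. I would take $\omega\colon(J,x)\mapsto(J,x)$. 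This is a bijection, hence surjective; and since in both posets $(J,x)\leq(J',x')$ holds precisely when $J\subseteq J'$ and $x'$ agrees with $x$ on $J$, the map $\omega$ preserves (indeed reflects) the ordering, with $\omega(\nulli)=\nulli$ corresponding to $J=\varnothing$.

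For the core identity $\mathbb{P}_{\tau(X)}^{i}=\mathbb{P}_Y^{\doop(\omega(i))}$, fix $i=(J,x)$. On the $\mathcal{M}_Y$ side, multiplying the equation for $Y^i$ by $1-A_{ii}$ (nonzero, since $|A_{ii}|\leq\|A\|<1$ as $A$ is a contraction) shows it is equivalent to $Y^i=\sum_j A_{ij}Y^j+F^i$; thus the unintervened system is $Y=AY+F$, and under $\omega(i)$ one has $Y^j=x_j$ for $j\in J$ together with $Y^i=(AY)^i+F^i$ for $i\notin J$. On the $\mathcal{M}_X$ side, the intervention fixes $X_t^j=x_j$ for all $t$, so the remaining coordinates $\tilde X_t=(X_t^i:i\notin J)$ obey $\tilde X_{t+1}=\tilde A\tilde X_t+c+\tilde E$, where $\tilde A$ is the principal submatrix of $A$ on the indices $\notin J$, $c_i=\sum_{j\in J}A_{ij}x_j$, and $\tilde E=(E^i:i\notin J)$. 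Passing to the limit (which exists by the contraction assumption, cf.\ the appendix) and using continuity of $v\mapsto Av$, the limit $y=\tau(X)$ satisfies $y^j=x_j$ for $j\in J$ and $y^i=(Ay)^i+E^i$ for $i\notin J$---precisely the intervened $\mathcal{M}_Y$ equations with $E$ in the role of $F$.

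It then remains to see that each of these two systems has a unique solution given by the same measurable function of the noise. The non-intervened block reads $(I-\tilde A)u=c+(\text{noise})$; since $\tilde A$ is a principal submatrix of the contraction $A$ it is itself a contraction, so $I-\tilde A$ is invertible and the solution is the single vector $(I-\tilde A)^{-1}(c+\cdot)$. Hence both $\tau(X)$ under $i$ and $Y$ under $\omega(i)$ arise by applying one and the same map to $\tilde E$, respectively $\tilde F$; because $E\sim\mathbb{P}$ and $F\sim\mathbb{P}$ are identically distributed, the two push-forward distributions coincide, which is the claimed identity. Combined with surjectivity and order-preservation of $\omega$, this establishes that $\mathcal{M}_Y$ is an exact $\tau$-transformation of $\mathcal{M}_X$.

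I expect the main obstacle to be the convergence and uniqueness arguments under intervention: one must check that fixing a subset of coordinates leaves a sub-dynamics that is still a contraction---so that the limit exists and $I-\tilde A$ is invertible---and that, whatever trajectory $\mathcal{M}_X$ admits, its limit is forced to be the unique fixed point of the equilibrium equation (passing to the limit in $X_{t+1}=AX_t+E$ yields $y=Ay+E$, hence $y=(I-A)^{-1}E$, independently of any freedom in the bi-infinite solution). I would reduce the contraction of the sub-dynamics to the elementary fact that a principal submatrix of a contraction is again a contraction, and borrow the convergence result from the appendix; checking that the intervened equilibrium equations match the structural equations of $\mathcal{M}_Y$ coordinate-for-coordinate is then routine.
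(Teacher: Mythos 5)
Your proposal is correct and follows essentially the same route as the paper's proof: the same intervention map $\omega$ (an order embedding sending $(J,x)$ to $(J,x)$), reduction to the fixed-point equations satisfied by $\lim_{t\to\infty}X_t$ via the contraction/convergence lemma, and the observation that $\tau(X)$ under $i$ and $Y$ under $\omega(i)$ are the same function of identically distributed noise. The extra care you take with $1-A_{ii}\neq 0$ and the invertibility of $I-\tilde{A}$ only makes explicit what the paper leaves implicit.
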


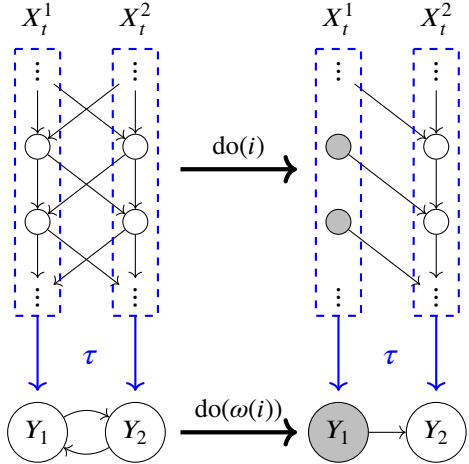
\begin{figure}
\center\
\begin{tikzpicture}
\def\hsep{1.3}
\def\copysep{4}
\foreach \i in {0,...,3} {
    \ifthenelse{\i=0 \OR \i=3}{
        \node (X\i) at(0,3-\i) {$\vdots$};
        \node (Y\i) at(\hsep,3-\i) {$\vdots$};
    }{
        \node[draw,circle] (X\i) at(0,3-\i) {};
        \node[draw,circle] (Y\i) at(\hsep,3-\i) {};
    }
}

\foreach \i in {0,...,2} {
    \pgfmathtruncatemacro{\ii}{\i + 1}
    \draw[->] (X\i) -- (X\ii);
    \draw[->] (X\i) -- (Y\ii);
    \draw[->] (Y\i) -- (X\ii);
    \draw[->] (Y\i) -- (Y\ii);
}
\foreach \i in {0,...,3} {
    \ifthenelse{\i=0 \OR \i=3}{
        \node (XX\i) at(\copysep,3-\i) {$\vdots$};
        \node (YY\i) at(\copysep+\hsep,3-\i) {$\vdots$};
    }{
        \node[draw,circle,fill=docolour]  (XX\i) at(\copysep,3-\i) {};
        \node[draw,circle] (YY\i) at(\copysep+\hsep,3-\i) {};
    }
}
\foreach \i in {0,...,2} {
    \pgfmathtruncatemacro{\ii}{\i + 1}
    \draw[->] (XX\i) -- (YY\ii);
    \draw[->] (YY\i) -- (YY\ii);
}
\node[draw,circle] (A) at(0,-1.8) {$Y_1$};
\node[draw,circle] (B) at(\hsep,-1.8) {$Y_2$};
\draw [->] (A) to [out=30,in=150] (B);
\draw [->] (B) to [out=210,in=-30] (A);
\node[draw,circle,fill=docolour] (AA) at(\copysep,-1.8) {$Y_1$};
\node[draw,circle] (BB) at(\copysep+\hsep,-1.8) {$Y_2$};
\draw[->] (AA) -- (BB);
\draw[->,ultra thick] (1.9,1.7) -- node[above] {$\doop(i)$} (3.45,1.7);
\draw[->,ultra thick] (1.9,-1.8) -- node[above] {$\doop(\omega(i))$} (3.45,-1.8);
\draw[dashed,thick,blue] (-.3,3.3) -- (0.3,3.3) -- (0.3,-.25) -- (-.3,-.25) -- (-.3,3.3);
\draw[dashed,thick,blue] (\hsep-.3,3.3) -- (\hsep+0.3,3.3) -- (\hsep+0.3,-.25) -- (\hsep-.3,-.25) -- (\hsep-.3,3.3);

\draw[dashed,thick,blue] (\copysep-.3,3.3) -- (\copysep+0.3,3.3) -- (\copysep+0.3,-.25) -- (\copysep-.3,-.25) -- (\copysep-.3,3.3);
\draw[dashed,thick,blue] (\copysep+\hsep-.3,3.3) -- (\copysep+\hsep+0.3,3.3) -- (\copysep+ \hsep+0.3,-.25) -- (\copysep+\hsep-.3,-.25) -- (\copysep+\hsep-.3,3.3);

\draw[->,blue,thick] (0,-.25) -- (0,-1.25);
\draw[->,blue,thick] (\hsep,-.25) -- (\hsep,-1.25);
\draw[->,blue,thick] (\copysep,-.25) -- (\copysep,-1.25);
\draw[->,blue,thick] (\copysep+\hsep,-.25) -- (\copysep+\hsep,-1.25);

\node[blue] (tau1) at(0.7,-.8) {\large$\tau$};
\node[blue] (tau2) at(4.7,-.8) {\large$\tau$};

\node (Xlab1) at(0,3.7) {$X^1_t$};
\node (Ylab1) at(\hsep,3.7) {$X^2_t$};

\node (Xlab2) at(\copysep,3.7) {$X^1_t$};
\node (Ylab2) at(\copysep+\hsep,3.7) {$X^2_t$};

\end{tikzpicture}
\caption{An illustration of the setting considered in Theorem~\ref{theorem:identical}. The discrete-time dynamical process is exactly transformed into a model describing its equilibria.}
\label{fig:stationary}
\end{figure}

The above theorem demonstrates how a linear additive SEM can arise as a result of making observations of a dynamical process.
This supports one interpretation of SEMs as a description of a dynamical process that equilibrates quickly compared to its external environment.\footnote{This interpretation corresponds to the assumption that the noise in the dynamical model is constant through time, and is used by e.\,g.\ \cite{lacerda2012discovering,Mooij_et_al_NIPS_11,hyttinen2012learning,mooij2013ode} and \cite{mooij2013cyclic} to meaningfully interpret cyclic SEMs.}
The framework of exact transformations allows us to explain in a precise way the sense in which such equilibrium models can be used as causal descriptions of an underlying dynamical process.

This result also sheds light on the interpretation of cyclic causal models.
One interpretation of the structural equations of an acyclic SEM is that they represent a temporally ordered series of mechanisms by which data are generated.
This is not possible in the case that the SEM exhibits cycles: there does not exist a partial ordering on the variables and hence one cannot think of each variable being generated temporally downstream of its parents.
By showing that cyclic SEMs can arise as exact transformations of \emph{acyclic} SEMs, we provide an interpretation of cyclic SEMs that does not suffer from the above problem.

\section{Discussion and Future work}\label{sec:questions}

It's turtles all the way down!
There is no such thing as a `correct' model, but in this paper we introduced the notions of exact transformations between SEMs to evaluate when two SEMs can be viewed as causally consistent models of the same system.
Illustrating how these notions can be used in order to relate differing model levels, we proved in Section~\ref{sec:example-transformations} the exactness of transformations occurring in three different settings.
These have implications for the following questions in causal modelling: When can we model only a subsystem of a more complex system? When does a micro-level system admit a causal description in terms of macro-level features? How do cyclic causal models arise?

Our work has implications for other problems in causal modelling.
It suggests that ambiguous manipulations~\citep{spirtes2004causal} may be thought of as arising due to the application of an inexact transformation to an SEM $\mathcal{M}_X$.
This was illustrated in Section~\ref{sec:cholesterol} in which LDL and HDL cholesterol were only measured via their sum TC, resulting in a model that suffered from the problem of ambiguous manipulations (cf.\ Figure~\ref{fig:cholesterol:a}) since it was not an exact transformation of the underlying model (cf.\ Figure~\ref{fig:cholesterol:b}).
This is related to the problem of causal variable definition as studied by~\cite{eberhardt2016green}.

A future line of enquiry would be to generalise the notion of an exact transformation in order to analyse the trade-off between model accuracy and model complexity for causal modelling using SEMs.
For a transformation to be exact, we require that the posets $\mathcal{P}_{\tau(X)}$ and $\mathcal{P}_Y$ be equal. One could imagine a `softening' of this requirement such that the distributions in the posets are required to be only approximately equal.
A slightly inaccurate model with a small number of variables may be preferable to an accurate but complex model.

We discussed the importance of an order-preserving $\omega$ to ensure a notion of causal consistency between two SEMs.
It would be interesting to better understand the conditions under which different properties of consistency between causal models hold -- for instance, counterfactual reasoning, which we have not discussed in this paper.

While we have introduced the notion of an exact transformation, we have not provided any criterion to choose from amongst the set of all possible exact transformations of an SEM. Foundational work in a similar direction to ours has been done by \cite{chalupka2015visual,chalupka2016multi}, who consider a particular discrete setting.
They provide algorithms to learn a transformation of a micro-level model to a macro-level model with desirable information-theoretic properties.
We conjecture that our framework may lead to extensions of their work, e.\,g.\ to the continuous setting.

Finally, suppose that we have made observations of an underlying system $\mathcal{M}_X$ via a measurement device $\tau$, and that we want to fit an SEM $\mathcal{M}_Y$ from a restricted model class to our data.
By using our framework, asking whether or not $\mathcal{M}_Y$ admits a causal interpretation consistent with $\mathcal{M}_X$ reduces to asking whether the transformation is exact.
More generally, by fixing any two of $\mathcal{M}_X$, $\tau$ and $\mathcal{M}_Y$, we can ask what properties must be fulfilled by the third in order for the two models to be causally consistent.
We hope that this may lead to the practical use of SEMs being theoretically grounded.

\subsubsection*{Acknowledgements}

We thank Tobias Mistele for valuable early feedback.
Stephan Bongers was supported by NWO, the Netherlands Organization for Scientific Research (VIDI grant 639.072.410).
This project has received funding from the European Research Council (ERC) under the European Union's Horizon 2020 research and innovation programme (grant agreement n$^{\mathrm{o}}$ 639466).

\begingroup
\renewcommand{\section}[2]{\subsubsection#1{#2}}
\bibliography{references}

\begin{thebibliography}{21}
\providecommand{\natexlab}[1]{#1}
\providecommand{\url}[1]{\texttt{#1}}
\expandafter\ifx\csname urlstyle\endcsname\relax
  \providecommand{\doi}[1]{doi: #1}\else
  \providecommand{\doi}{doi: \begingroup \urlstyle{rm}\Url}\fi

\bibitem[Balian(1992)]{Balian}
R.~Balian.
\newblock \emph{From microphysics to macrophysics}.
\newblock Springer, 1992.

\bibitem[Bollen(2014)]{bollen2014structural}
K.~A. Bollen.
\newblock \emph{Structural equations with latent variables}.
\newblock John Wiley \& Sons, 2014.

\bibitem[Bongers et~al.(2016)Bongers, Peters, Sch{\"o}lkopf, and
  Mooij]{bongers2016structural}
S.~Bongers, J.~Peters, B.~Sch{\"o}lkopf, and J.~M. Mooij.
\newblock Structural causal models: Cycles, marginalizations, exogenous
  reparametrizations and reductions.
\newblock \emph{arXiv preprint arXiv:1611.06221}, 2016.

\bibitem[Chalupka et~al.(2015)Chalupka, Perona, and
  Eberhardt]{chalupka2015visual}
K.~Chalupka, P.~Perona, and F.~Eberhardt.
\newblock Visual causal feature learning.
\newblock In \emph{Proceedings of the Thirty-First Conference on Uncertainty in
  Artificial Intelligence}, pages 181--190. AUAI Press, 2015.

\bibitem[Chalupka et~al.(2016)Chalupka, Perona, and
  Eberhardt]{chalupka2016multi}
K.~Chalupka, P.~Perona, and F.~Eberhardt.
\newblock Multi-level cause-effect systems.
\newblock In \emph{The 19th International Conference on Artificial Intelligence
  and Statistics.}, 2016.

\bibitem[Dash and Druzdzel(2001)]{dash2001caveats}
D.~Dash and M.~J. Druzdzel.
\newblock Caveats for causal reasoning with equilibrium models.
\newblock \emph{Lecture notes in computer science}, pages 192--203, 2001.

\bibitem[Eberhardt(2016)]{eberhardt2016green}
F.~Eberhardt.
\newblock Green and grue causal variables.
\newblock \emph{Synthese}, 193\penalty0 (4):\penalty0 1029--1046, 2016.

\bibitem[Fisher(1970)]{fisher1970correspondence}
F.~M. Fisher.
\newblock A correspondence principle for simultaneous equation models.
\newblock \emph{Econometrica: Journal of the Econometric Society}, pages
  73--92, 1970.

\bibitem[Hoel et~al.(2013)Hoel, Albantakis, and Tononi]{hoel2013quantifying}
E.~P. Hoel, L.~Albantakis, and G.~Tononi.
\newblock Quantifying causal emergence shows that macro can beat micro.
\newblock \emph{Proceedings of the National Academy of Sciences}, 110\penalty0
  (49):\penalty0 19790--19795, 2013.

\bibitem[Hyttinen et~al.(2012)Hyttinen, Eberhardt, and
  Hoyer]{hyttinen2012learning}
A.~Hyttinen, F.~Eberhardt, and P.~O. Hoyer.
\newblock Learning linear cyclic causal models with latent variables.
\newblock \emph{Journal of Machine Learning Research}, 13\penalty0
  (Nov):\penalty0 3387--3439, 2012.

\bibitem[Iwasaki and Simon(1994)]{iwasaki1994causality}
Y.~Iwasaki and H.~A. Simon.
\newblock Causality and model abstraction.
\newblock \emph{Artificial Intelligence}, 67\penalty0 (1):\penalty0 143--194,
  1994.

\bibitem[Lacerda et~al.(2008)Lacerda, Spirtes, Ramsey, and
  Hoyer]{lacerda2012discovering}
G.~Lacerda, P.~L. Spirtes, J.~Ramsey, and P.~O. Hoyer.
\newblock Discovering cyclic causal models by independent components analysis.
\newblock In \emph{Proceedings of the Twenty-Fourth Conference Annual
  Conference on Uncertainty in Artificial Intelligence}, 2008.

\bibitem[Mooij and Heskes(2013)]{mooij2013cyclic}
J.~M. Mooij and T.~Heskes.
\newblock Cyclic causal discovery from continuous equilibrium data.
\newblock In \emph{Proceedings of the Twenty-Ninth Conference Annual Conference
  on Uncertainty in Artificial Intelligence}, 2013.

\bibitem[Mooij et~al.(2011)Mooij, Janzing, Heskes, and
  Sch{\"o}lkopf]{Mooij_et_al_NIPS_11}
J.~M. Mooij, D.~Janzing, T.~Heskes, and B.~Sch{\"o}lkopf.
\newblock On causal discovery with cyclic additive noise models.
\newblock In J.~Shawe-Taylor, R.~Zemel, P.~Bartlett, F.~Pereira, and
  K.~Weinberger, editors, \emph{{A}dvances in {N}eural {I}nformation
  {P}rocessing {S}ystems 24 ({NIPS}*2011)}, pages 639--647, 2011.

\bibitem[Mooij et~al.(2013)Mooij, Janzing, and Sch{\"o}lkopf]{mooij2013ode}
J.~M. Mooij, D.~Janzing, and B.~Sch{\"o}lkopf.
\newblock {From Ordinary Differential Equations to Structural Causal Models:
  the deterministic case}.
\newblock In \emph{Proceedings of the Twenty-Ninth Conference Annual Conference
  on Uncertainty in Artificial Intelligence}, pages 440--448, 2013.

\bibitem[Pearl(2009)]{pearl2009causality}
J.~Pearl.
\newblock \emph{Causality}.
\newblock Cambridge University Press, 2009.

\bibitem[Simon and Ando(1961)]{simon1961aggregation}
H.~A. Simon and A.~Ando.
\newblock Aggregation of variables in dynamic systems.
\newblock \emph{Econometrica: journal of the Econometric Society}, pages
  111--138, 1961.

\bibitem[Spirtes and Scheines(2004)]{spirtes2004causal}
P.~Spirtes and R.~Scheines.
\newblock Causal inference of ambiguous manipulations.
\newblock \emph{Philosophy of Science}, 71\penalty0 (5):\penalty0 833--845,
  2004.

\bibitem[Spirtes et~al.(2000)Spirtes, Glymour, and
  Scheines]{spirtes2000causation}
P.~Spirtes, C.~N. Glymour, and R.~Scheines.
\newblock \emph{Causation, prediction, and search}.
\newblock MIT press, 2000.

\bibitem[Steinberg(2011)]{steinberg2011cholesterol}
D.~Steinberg.
\newblock \emph{The Cholesterol Wars: The Skeptics vs the Preponderance of
  Evidence}.
\newblock Academic Press, 2011.

\bibitem[Truswell(2010)]{truswell2010cholesterol}
A.~S. Truswell.
\newblock \emph{Cholesterol and beyond: the research on diet and coronary heart
  disease 1900-2000}.
\newblock Springer Science \& Business Media, 2010.

\end{thebibliography}
\endgroup

\clearpage
\onecolumn
\appendix
{\bf\Large Appendix}

\section{Proofs for Section~\ref{sec:exact_transformation_sem}: elementary exact transformations}\label{first_properties:appendix}

{
\renewcommand{\thedefinition}{\ref{lemma:elementary}}
\begin{lemma}
The identity mapping and permuting the labels of variables are both exact transformations.
That is, if $\mathcal{M}_X$ is an SEM and $\pi:\mathbb{I}_X \to \mathbb{I}_X$ is a bijection then the transformation
\begin{align*}
\tau:\mathcal{X}&\to\mathcal{Y}\\
(x_i:i\in\mathbb{I}_X) &\mapsto (x_{\pi(i)}:i\in\mathbb{I}_X)
\end{align*}
naturally gives rise to an SEM $\mathcal{M}_Y$ that is an exact $\tau$-transformation of $\mathcal{M}_X$, corresponding to relabelling the variables.
\end{lemma}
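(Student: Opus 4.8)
The plan is to exhibit the SEM $\mathcal{M}_Y$ together with a surjective order-preserving map $\omega:\mathcal{I}_X\to\mathcal{I}_Y$ and then verify the defining distributional identity of Definition~\ref{def:exacttrafos} directly. The identity mapping is simply the special case $\pi=\mathrm{id}$, so it suffices to treat an arbitrary bijection $\pi$. Since $\tau$ sends $(x_i:i\in\mathbb{I}_X)$ to $(x_{\pi(i)}:i\in\mathbb{I}_X)$, the $i$-th coordinate of $Y=\tau(X)$ is $Y_i=X_{\pi(i)}$, and because $\pi$ is a bijection the inverse relation $X_j=Y_{\pi^{-1}(j)}$ holds. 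I would use this to relabel every ingredient of $\mathcal{M}_X$: define the structural equations of $\mathcal{M}_Y$ by substituting $X_j=Y_{\pi^{-1}(j)}$ into the equation for $X_{\pi(i)}$, set the exogenous variable of the $i$-th equation to be $\widetilde E_i=E_{\pi(i)}$ (so $\mathbb{P}_{\widetilde E}$ is the pushforward of $\mathbb{P}_E$ under the coordinate permutation $\pi$), and define $\mathcal{I}_Y:=\omega(\mathcal{I}_X)$, where $\omega$ sends an intervention $\doop(X_j=a_j:j\in J)$ to $\doop(Y_{\pi^{-1}(j)}=a_j:j\in J)$.

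With these definitions $\omega$ is surjective by construction. To see it is order-preserving, note that if $i\leq_X j$ then the target set of $i$ is a subset of that of $j$ with identical prescribed values; applying the bijection $\pi^{-1}$ to the target indices preserves inclusion and leaves the values untouched, so $\omega(i)\leq_Y\omega(j)$. In particular the null intervention is mapped to the null intervention, consistent with the commutativity discussion of Section~\ref{subsec:causal-interpretation-transformation}.

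The heart of the argument is the distributional identity $\mathbb{P}_{\tau(X)}^{i}=\mathbb{P}_Y^{\doop(\omega(i))}$, which I would establish at the level of individual solutions. Fix a draw $E=e$ and let $x$ be the (almost surely unique) solution of $\mathcal{M}_X$ under intervention $i$. I claim $y:=\tau(x)$, i.e.\ $y_i=x_{\pi(i)}$, is the unique solution of $\mathcal{M}_Y$ under $\omega(i)$ with exogenous draw $\widetilde e_i=e_{\pi(i)}$. For a non-intervened index $i$ the relabeled equation reads $Y_i=f_{\pi(i)}\big((Y_{\pi^{-1}(k)})_k,E_{\pi(i)}\big)$; substituting $y$ gives $f_{\pi(i)}(x,e_{\pi(i)})=x_{\pi(i)}=y_i$, since $y_{\pi^{-1}(k)}=x_k$ and $x$ solves the original equation. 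For an intervened index the prescribed value of $\omega(i)$ matches $x_{\pi(i)}=y_i$ by the definition of $\omega$. Since $\tau$ is a bijection of $\mathcal{X}$ onto $\mathcal{Y}$ that intertwines the two solution maps, pushing $\mathbb{P}_X^{\doop(i)}$ forward through $\tau$ yields exactly $\mathbb{P}_Y^{\doop(\omega(i))}$.

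The argument involves no hard analysis; the only thing to watch is the index bookkeeping---consistently threading $\pi$ through the structural equations and exogenous variables while threading $\pi^{-1}$ through the intervention targets---and confirming that the correspondence $x\mapsto\tau(x)$ is a genuine bijection between solution sets, so that uniqueness (hence well-definedness of $\mathcal{M}_Y$ as an SEM) transfers. I would therefore present the solution-level correspondence carefully and let the distributional identity and the order-preserving claim follow from it.
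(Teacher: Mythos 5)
Your proposal is correct and follows essentially the same route as the paper: both construct $\mathcal{M}_Y$ by relabelling the structural equations, exogenous variables, and interventions of $\mathcal{M}_X$ via the permutation, with $\omega$ the induced relabelling of interventions. The paper states this construction in a single sentence and leaves the verification implicit, whereas you spell out the solution-level correspondence, surjectivity, and order-preservation explicitly; your index bookkeeping (threading $\pi$ through equations and $\pi^{-1}$ through intervention targets) is consistent with the stated form of $\tau$.
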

\addtocounter{definition}{-1}
}
\begin{proof}[Proof of Lemma~\ref{lemma:elementary}]
Consider the SEM $\mathcal{M}_Y$ obtained from $\mathcal{M}_X$ by replacing, for all $i\in\mathbb{I}_X$, any occurrence of $X_i$ in the structural equations $\mathcal{S}_X$ and interventions $\mathcal{I}_X$ by $Y_{\pi(i)}$ and leaving the distribution over the exogenous variables unchanged.
\end{proof}

\begin{proof}[Proof of Lemma~\ref{theorem:transitivity} (Transitivity of exact transformations)]
    Let $\omega_{ZY}:\mathcal{I}_Y \to \mathcal{I}_Z$ and $\omega_{YX}:\mathcal{I}_X \to \mathcal{I}_Y$ be the mappings between interventions corresponding to the exact transformations $\tau_{ZY}$ and $\tau_{YX}$ respectively and define $\omega_{ZX} = \omega_{ZY}\circ\omega_{YX}:\mathcal{I}_X \to \mathcal{I}_Z$.
    Then $\omega_{ZX}$ is surjective and order-preserving since both $\omega_{ZY}$ and $\omega_{YX}$ are surjective and order-preserving.
    Since $\tau_{ZY}$ and $\tau_{YX}$ are exact it follows that for all $i\in\mathcal{I}_X$
    \begin{align*}
        \mathbb{P}^{i}_{\tau_{ZX}(X)}
        =
        \mathbb{P}^{ \omega_{ZY}(\omega_{YX}(i))}_{\tau_{ZY}(\tau_{YX}(X))}
        =
        \mathbb{P}^{\doop(\omega_{ZX}(i))}_{Z}
    \end{align*}
    i.\,e.\ $\mathcal{M}_Z$ is an $\tau_{ZX}$-exact transformation of $\mathcal{M}_X$.
\end{proof}

\section{Proofs for Section~\ref{sec:basic_trafos}: Marginalisation of variables}\label{marginalisation:appendix}

\begin{proof}[Proof of Theorem~\ref{theorem:childless} (Marginalisation of childless variables)]
By Lemma~\ref{theorem:transitivity} it suffices to proof this for marginalisation of one childless variable.
Without loss of generality, let $X_1$ be the childless variable to be marginalised out.

Let $\mathcal{M}_Y=(\mathcal{S}_Y,\mathcal{I}_Y,\mathbb{P}_F)$ be the SEM where
\begin{itemize}
    \item the structural equations $\mathcal{S}_Y$ are obtained from $\mathcal{S}_X$ by removing the structural equation corresponding to the childless variable $X_1$;
    \item $\mathcal{I}_Y$ is the image of the map $\omega:\mathcal{I}_X \to \mathcal{I}_Y$ that drops any reference to the variable $X_1$ (e.\,g.\ ${\doop( X_1=x_1, X_2=x_2) \in \mathcal{I}_X}$ would be mapped to $\doop(X_2=x_2) \in \mathcal{I}_Y$);
    \item $F = (E_i:\ i \in \mathbb{I}_X\setminus\{1\})$ are the remaining noise variables distributed according to their marginal distribution under $\mathbb{P}_E$.
\end{itemize}
By construction, $\omega$ is surjective and order-preserving.
Let $i\in\mathcal{I}_X$ be any intervention.
The variable $X_1$ being childless ensures that the law on the remaining variables $X_k,k\in\mathbb{I}_X\setminus\{1\}$ that we obtain by \emph{marginalisation} of the childless variable, i.\,e.\ $\mathbb{P}_{\tau(X)}^{i}$, is equivalent to the law one obtains by simply \emph{dropping} the childless variable, which is exactly what the law under $\mathcal{M}_Y$ amounts to, i.\,e.\ $\mathbb{P}_Y^{\omega({\doop(i)})}$.
\end{proof}

\begin{proof}[Proof of Theorem~\ref{theorem:never_intervened} (Marginalisation of non-intervened variables)]
By Lemma~\ref{theorem:transitivity} it suffices to proof this for marginalisation of one never-intervened-upon variable.
Without loss of generality, let $X_1$ be the never-intervened-upon variable to be marginalised out.
By acyclicity of the SEM $\mathcal{M}_X$, the structural equation corresponding to variable $X_1$ is of the form $X_1 = f_1\left(\mathbf{X}_{\pa(1)},E_1\right)$ and $X_1$ does not appear in the structural equation for any of its ancestors.

Now let $\mathcal{M}_Y=(\mathcal{S}_Y,\mathcal{I}_Y,\mathcal{P}_F)$ be the SEM where
\begin{itemize}
    \item $\mathcal{I}_Y = \mathcal{I}_X$;
    \item $F_i = ((E_i,E_1):\ i \in \mathbb{I}_X\setminus\{1\})$ are the noise variables distributed as implied by $\mathbb{P}_E$;
    \item the structural equations $\mathcal{S}_Y$ are obtained from $\mathcal{S}_X$ by removing the structural equation of $X_1$ and replacing any occurrence of $X_1$ in the right-hand side of the structural equations of children of $X_1$ by $f_1\left(\mathbf{X}_{\pa(1)},E_1\right)$, yielding $X_i = f_i\left(f_1\left(\mathbf{X}_{\pa(1)},E_1\right),\ \mathbf{X}_{\pa(i)},\ E_i\right)$.
\end{itemize}
Note that the structural equations  of the resulting SEM are still acyclic and are all of the form $X_i = h_i\left(\mathbf{X}_{\setminus i},\ F_i\right)$.

Then $\mathcal{M}_Y$ is, by construction, an $\tau$-exact transformation of $\mathcal{M}_X$ for $\omega=\operatorname{id}$.
\end{proof}

\section{Proof for Section~\ref{subsection:micromacro}: Micro- to macro-level}\label{micromacro:appendix}

\begin{proof}[Proof of Theorem~\ref{theorem:micro-macro}]
We begin by defining a mapping between interventions
\begin{align*}
\omega : \mathcal{I}_X &\rightarrow \mathcal{I}_Y \\
\nulli &\mapsto \nulli \\
\doop(W= w)&\mapsto \doop\left(\widehat{W} = \frac{1}{n}\sum_{i=1}^n w_i\right) \\
\doop(Z= z)&\mapsto \doop\left(\widehat{Z} = \frac{1}{m}\sum_{i=1}^m z_i\right) \\
\doop(W=w, Z=z)&\mapsto \doop\left(\widehat{W} = \frac{1}{n}\sum_{i=1}^n w_i,\, \widehat{Z} = \frac{1}{m}\sum_{i=1}^m z_i\right)
\end{align*}

Note that $\omega$ is surjective and order-preserving (in fact, it is an order embedding).
Therefore, it only remains to show that the distributions implied by $\tau(X)$ under any intervention $i\in\mathcal{I}_X$ agree with the corresponding distributions implied by $\mathcal{M}_Y$.
That is, we have to show that
\[ \mathbb{P}_{\tau(X)}^{i} = \mathbb{P}_{Y}^{\doop(\omega(i))} \quad \forall i\in\mathcal{I}_X \]
In the observational setting, the distribution over $\mathcal{Y}$ is implied by the following equations:
\begin{align*}
\widehat{W} &= \frac{1}{n}\sum_{i=1}^n W_i = \frac{1}{n}\sum_{i=1}^n E_i \\
\widehat{Z} &= \frac{1}{m}\sum_{i=1}^m Z_i =  \frac{1}{m}\sum_{i=1}^m \left( \sum_{j=1}^n A_{ij}W_j  + F_i\right) = \frac{a}{m}\widehat{W} +  \frac{1}{m}\sum_{i=1}^m F_i
\end{align*}
Since the distributions of the exogenous variables in $\mathcal{M}_Y$ are given by $\widehat{E} \sim \frac{1}{n}\sum_{i=1}^n E_i$, $\widehat{F} \sim \frac{1}{m}\sum_{i=1}^m F_i$, it follows that $\mathbb{P}_{\tau(X)}^{\doop(\nulli)}$  and $\mathbb{P}_{Y}^{\doop(\nulli)}$ agree. Similarly, the push-forward measure on $\mathcal{Y}$ induced by the intervention $\doop(W=w)\in\mathcal{I}_X$ is given by
\begin{align*}
\widehat{W} &= \frac{1}{n}\sum_{i=1}^n W_i = \frac{1}{n}\sum_{i=1}^n w_i \\
\widehat{Z} &= \frac{1}{m}\sum_{i=1}^m Z_i =  \frac{1}{m}\sum_{i=1}^m \left( \sum_{j=1}^n A_{ij}W_j  + F_i\right) = \frac{a}{m}\widehat{W} +  \frac{1}{m}\sum_{i=1}^m F_i
\end{align*}
which is the same as the distribution induced by the $\omega$-corresponding intervention $\doop\left(\widehat{W} = \frac{1}{n}\sum_{i=1}^n w_i\right)$ in $\mathcal{M}_Y$.

Similar reasoning shows that this also holds for the interventions $\doop(Z=z)$ and $\doop(W=w, Z=z)$.

\end{proof}

\section{Proof for Section~\ref{subsec:stationary}: stationary behaviour of dynamical processes}\label{theorem:identical:appendix}

\begin{proof}[Proof of Theorem~\ref{theorem:identical}]
We begin by defining a mapping between interventions
\begin{align*}
\omega : \mathcal{I}_X &\rightarrow \mathcal{I}_Y \\
\doop(X_t^j= x_j \enspace \forall t \in \mathbb{Z}, \: \forall j \in J) & \mapsto \doop(Y^j= x_j \ \forall j \in J)
\end{align*}
Note that $\omega$ is surjective and order-preserving (in fact, it is an order embedding).
Therefore, it only remains to show that the distributions implied by $\tau(X)$ under any intervention $i\in\mathcal{I}_X$ agree with the corresponding distributions implied by $\mathcal{M}_Y$.
That is, we have to show that
\[ \mathbb{P}_{\tau(X)}^{i} = \mathbb{P}_{Y}^{\doop(\omega(i))} \quad \forall i\in\mathcal{I}_X \]
For this we consider, without loss of generality, the distribution arising from performing the $\mathcal{M}_X$-level intervention
\[ i = \doop(X_t^j=x_j\ \forall t\in\mathbb{Z},\forall j \leq m \leq n) \in \mathcal{I}_X \]
for $m \in [n]$ (for $m=0$ this amounts to the null-intervention).

Since $A$ is a contraction mapping, it follows from Lemma~\ref{lemma:contraction_convergence} that for any intervention in $\mathcal{I}_X$, the sequence of random variables $X_t$ defined by $\mathcal{M}_X$ converges everywhere.
That is, there exists a random variable $X_*$ such that ${X_t \xrightarrow[t\to\infty]{\text{everywhere}} X_*}$.
In the case of the intervention $i$ above, the random variable $X_*$ satisfies:
\begin{align}\label{eq:x-star-cases}
\begin{cases}
X^k_{*} = x_k & \text{if}\  k \leq m \\
X^k_{*} = \sum_j A_{kj}X^j_{*} + E^k  & \text{if}\ m < k \leq n
\end{cases}
\end{align}
Since $\tau(X) = \lim_{t\rightarrow \infty}X_t$, it follows from the definition of $X_*$ that $\tau(X)= X_*$, and hence $\tau(X)$ also satisfies the equations above.
It follows (rewriting the second line in Equation \ref{eq:x-star-cases} above) that under the push-forward measure $\mathbb{P}_{\tau(X)}^{i} = \tau\left(\mathbb{P}_X^{\doop(i)}\right)$ the distribution of the random variable $\tau(X)=X_*$ is given by:
\begin{align*}
\begin{cases}
X_*^k = x_k & \text{if}\ k \leq m \\
X_*^k = \frac{\sum_{j\neq k} A_{kj}X_*^j}{1-A_{kk}} + \frac{E^k}{1-A_{kk}} & \text{if}\ m < k \leq n
\end{cases}
\end{align*}
We need to compare this to the law of $Y$ as implied by $\mathcal{M}_Y$ under the intervention $\omega(i)$, i.\,e.\ $\mathbb{P}_Y^{\doop(\omega(i))}$.
The $\mathcal{M}_Y$-level intervention $\omega(i)$ corresponding to $i$ is
\[ \omega(i) = \doop(Y^j=x_j\ \forall j\leq m\leq n) \in \mathcal{I}_Y \]
and so the structural equations of $\mathcal{M}_Y$ under the intervention $\omega(\doop(i))$ are
\begin{align*}
\begin{cases}
Y^k = x_k  & \text{if}\ k\leq m \\
Y^k = \frac{\sum_{j\neq k} A_{kj}Y^j}{1-A_{kk}} + \frac{F^k}{1-A_{kk}}  & \text{if}\ m <  k \leq n
\end{cases}
\end{align*}
Since $F\sim E$ it indeed follows that $\tau(X) \sim Y$, i.\,e.\ $\mathbb{P}_{\tau(X)}^{i} = \mathbb{P}_Y^{\doop(\omega(i))}$.

Thus $\mathcal{M}_Y$ is an exact $\tau$-transformation of $\mathcal{M}_X$.
\end{proof}

\subsection{Contraction mapping and convergence}\label{contractionmapping:appendix}

The following Lemmata show that $A$ being a contraction mapping ensures that the sequence $(X_t)_{t\in\mathbb{Z}}$ defined by $\mathcal{M}_X$ in Theorem~\ref{theorem:identical} converges everywhere under any intervention $i\in\mathcal{I}_X$.
That is, for any realisation $(x_t)_{t\in\mathbb{Z}}$ of this sequence, its limit $\lim_{t\rightarrow \infty}x_t$ as a sequence of elements of $\mathbb{R}^n$ exists.

\begin{lemma}\label{lemma:contraction_add}
Suppose that the function
\begin{align*}
f: \mathbb{R}^n & \rightarrow \mathbb{R}^m \\
 x & \mapsto f(x)
\end{align*}
is a contraction mapping.
Then, for any $e \in \mathbb{R}^m$, so is the function
\begin{align*}
f^*: \mathbb{R}^n & \rightarrow \mathbb{R}^m \\
 x & \mapsto f(x) + e
\end{align*}
\end{lemma}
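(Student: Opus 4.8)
The plan is to verify the definition of a contraction directly, observing that the additive constant $e$ plays no role in the relevant difference. Recall that $f$ being a contraction mapping means there exists a constant $L \in [0,1)$ such that $\|f(x) - f(y)\| \le L\,\|x-y\|$ holds for all $x,y \in \mathbb{R}^n$, where $\|\cdot\|$ denotes the Euclidean norm on the respective spaces. The goal is to exhibit the same (or any) constant strictly less than $1$ as a valid Lipschitz constant for $f^*$.

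First I would fix an arbitrary $e \in \mathbb{R}^m$ and compute, for arbitrary $x,y \in \mathbb{R}^n$, the difference
\[
f^*(x) - f^*(y) = \bigl(f(x) + e\bigr) - \bigl(f(y) + e\bigr) = f(x) - f(y),
\]
so that the constant offset $e$ cancels exactly. Taking norms on both sides and applying the contraction hypothesis on $f$ then yields
\[
\|f^*(x) - f^*(y)\| = \|f(x) - f(y)\| \le L\,\|x-y\|,
\]
with precisely the same constant $L \in [0,1)$. Since $x,y$ were arbitrary, this shows that $f^*$ satisfies the contraction inequality with constant $L < 1$, which is exactly what is required.

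There is essentially no obstacle here: the entire content is the cancellation of the translation term, and the only point demanding any care is to keep the definition of a contraction honest, namely insisting that the shared Lipschitz constant $L$ be \emph{strictly} less than $1$ (so that $f^*$ is a genuine contraction and not merely non-expansive) and to note that the bound is preserved verbatim rather than weakened. I would close by remarking that this is the auxiliary fact needed to apply a Banach-type fixed-point argument to the intervened dynamics of $\mathcal{M}_X$ in Theorem~\ref{theorem:identical}, since each intervention amounts to composing the contraction $v \mapsto Av$ with such an additive shift by the (fixed) noise realisation.
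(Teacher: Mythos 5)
Your proof is correct and follows exactly the same route as the paper's: the translation by $e$ cancels in the difference $f^*(x)-f^*(y)=f(x)-f(y)$, so the same contraction constant carries over verbatim. No gaps.
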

\begin{proof}
By definition, there exists $c<1$ such that for any $x,y\in \mathbb{R}^n$,
\[
\| f^*(x)-f^*(y) \| = \| (f(x)+e)-(f(y)+e) \| = \| f(x)-f(y) \| \leq c \| x - y \|
\]
and hence $f^*$ is a contraction mapping.
\end{proof}

\begin{lemma}\label{lemma:contraction_intervene}
Suppose that the function
\begin{align*}
f: \mathbb{R}^n & \rightarrow \mathbb{R}^n \\
 x = \begin{pmatrix}x_1\\ \vdots \\ x_n \end{pmatrix}
 & \mapsto \begin{pmatrix}f_1(x)\\ \vdots \\ f_n(x) \end{pmatrix}
\end{align*}
is a contraction mapping. Then for any $m \leq n$, and $x^*_i \in \mathbb{R}, \: i \in [m]$, so is the function
\begin{align*}
f^*: \mathbb{R}^n & \rightarrow \mathbb{R}^n \\
 x = \begin{pmatrix}x_1\\ \vdots \\ x_n \end{pmatrix}
 & \mapsto \begin{pmatrix}x^*_1\\ \vdots \\ x^*_m \\ f_{m+1}(x) \\ \vdots \\ f_n(x) \end{pmatrix}
\end{align*}
\end{lemma}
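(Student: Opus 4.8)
The plan is to show that $f^*$ is a contraction with the \emph{same} Lipschitz constant $c<1$ as $f$. The key observation is that $f^*$ differs from $f$ only in that its first $m$ coordinate functions have been replaced by the constants $x_1^*,\ldots,x_m^*$. Since constants cancel in any difference $f^*(x)-f^*(y)$, the surviving coordinates of this difference are exactly those of $f(x)-f(y)$, and dropping coordinates cannot increase the norm. Chaining this with the contraction property of $f$ yields the claim, and no fixed-point or convergence argument is needed at this stage.

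First I would write out the difference explicitly. Since $f$ is a contraction, there exists $c<1$ with $\|f(x)-f(y)\|\le c\|x-y\|$ for all $x,y\in\mathbb{R}^n$. For $f^*$ as defined, the first $m$ entries of $f^*(x)-f^*(y)$ vanish (they equal $x_i^*-x_i^*=0$), while for $m<k\le n$ the $k$-th entry equals $f_k(x)-f_k(y)$. Thus $f^*(x)-f^*(y)$ is obtained from $f(x)-f(y)$ simply by setting its first $m$ coordinates to zero.

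The main step is then to bound the norm of this truncated vector by the norm of the original. Working with the Euclidean norm, this is immediate from
\[ \|f^*(x)-f^*(y)\|^2 = \sum_{k=m+1}^n \bigl(f_k(x)-f_k(y)\bigr)^2 \le \sum_{k=1}^n \bigl(f_k(x)-f_k(y)\bigr)^2 = \|f(x)-f(y)\|^2, \]
where the inequality holds because we have only omitted non-negative summands. Combining this with the contraction property of $f$ gives $\|f^*(x)-f^*(y)\| \le \|f(x)-f(y)\| \le c\|x-y\|$, so $f^*$ is a contraction with the same constant $c<1$.

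The one point requiring care is the norm-monotonicity step: zeroing coordinates must not increase the norm. Equivalently, the coordinate projection $P$ that annihilates the first $m$ entries satisfies $f^*(x)-f^*(y)=P\bigl(f(x)-f(y)\bigr)$ with operator norm $\|P\|_{\mathrm{op}}\le 1$. This holds for the Euclidean norm (and any coordinate-monotone norm) but would fail for an arbitrary norm, so I would make explicit that the contraction is understood with respect to the Euclidean norm, consistent with its intended use in Theorem~\ref{theorem:identical}. Together with Lemma~\ref{lemma:contraction_add}, this lemma then guarantees that the intervened dynamics of $\mathcal{M}_X$ remain contractive and hence admit a unique equilibrium.
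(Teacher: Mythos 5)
Your proof is correct and follows essentially the same route as the paper's: the first $m$ coordinates of $f^*(x)-f^*(y)$ vanish, dropping coordinates does not increase the norm, and chaining with the contraction property of $f$ gives the bound with the same constant $c$. Your explicit remark that the norm-monotonicity step requires a coordinate-monotone (e.g.\ Euclidean) norm is a sensible clarification that the paper leaves implicit, but it does not change the argument.
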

\begin{proof}
By definition, there exists $c < 1$ such that for any $x,y\in \mathbb{R}^n$,
\begin{align*}
\| f^*(x) - f^*(y) \| =
\left\Vert
\begin{pmatrix}x^*_1\\ \vdots \\ x^*_m \\ f_{m+1}(x) \\ \vdots \\ f_n(x) \end{pmatrix} - \begin{pmatrix}x^*_1\\ \vdots \\ x^*_m \\ f_{m+1}(y) \\ \vdots \\ f_n(y) \end{pmatrix} \right\Vert =
\left\Vert
\begin{pmatrix} 0 \\ \vdots \\ 0 \\ f_{m+1}(x) - f_{m+1}(y) \\ \vdots \\ f_{n}(x) - f_n(y) \end{pmatrix} \right\Vert &\leq
\left\Vert
\begin{pmatrix} f_{1}(x) - f_1(y)  \\ \vdots \\ f_{n}(x) - f_n(y) \end{pmatrix} \right\Vert \\
& = \| f(x)-f(y) \|    \\
&\leq c \| x - y \|
\end{align*}
and hence $f^*$ is a contraction mapping.
\end{proof}

\begin{lemma}\label{lemma:contraction_convergence}
Consider the SEM $\mathcal{M}_X$ in Theorem~\ref{theorem:identical}, and suppose that the linear map $A:\mathbb{R}^n \to \mathbb{R}^n$ is a contraction mapping.
Then, for any intervention $i\in\mathcal{I}_X$, the sequence of $X_t$ converges everywhere.
\end{lemma}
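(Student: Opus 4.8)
The plan is to exploit the identical-noise assumption $E_t = E$: once we fix a realisation $E = e$ and an intervention $i \in \mathcal{I}_X$, the recurrence $X_{t+1} = A X_t + E$ becomes iteration of a single, \emph{time-independent} map $g : \mathbb{R}^n \to \mathbb{R}^n$. Convergence of $(X_t)$ then reduces to showing that $g$ is a contraction and invoking the Banach fixed-point theorem. It is precisely the constancy of the noise in $t$ that makes $g$ autonomous; were $E_t$ allowed to vary with $t$, the update would be time-dependent and this argument would not apply directly.

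First I would write the intervened update map explicitly. Suppose $i$ fixes the coordinates indexed by $J \subseteq \{1,\dots,n\}$ to values $x_j$; after relabelling we may assume (as in Lemma~\ref{lemma:contraction_intervene}) that $J = \{1,\dots,m\}$. Then one step of the intervened dynamics sends $x_t$ to the vector whose first $m$ entries are the fixed constants $x_1,\dots,x_m$ and whose remaining entries are $(A x_t + e)_{m+1},\dots,(A x_t + e)_n$. This is exactly the coordinate-fixing construction of Lemma~\ref{lemma:contraction_intervene} applied to $f(x) = A x + e$.

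Next I would establish that $g$ is a contraction in two steps. Since $A$ is a contraction by hypothesis, Lemma~\ref{lemma:contraction_add} shows that $x \mapsto A x + e$ is a contraction for every fixed $e$. Applying Lemma~\ref{lemma:contraction_intervene} to this map then shows that $g$ -- the map obtained by overwriting the first $m$ coordinates with the intervention constants -- is itself a contraction, with the same contraction constant $c < 1$.

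Finally, for any realisation $(x_t)_{t \in \mathbb{Z}}$ and any $t_0$, the forward tail satisfies $x_{t_0 + k} = g^k(x_{t_0})$ for $k \geq 0$. Since $\mathbb{R}^n$ is complete and $g$ is a contraction, the Banach fixed-point theorem yields a unique fixed point $x_*$ of $g$ with $g^k(x_{t_0}) \to x_*$, so $\lim_{t \to \infty} x_t = x_*$ exists. As this holds for every realisation $e$ of $E$ and every intervention $i \in \mathcal{I}_X$, the sequence converges everywhere. The only delicate point is the reduction of the intervened recurrence to the hypotheses of the two lemmas -- in particular the relabelling of the intervened coordinates and the observation that identical noise is what renders $g$ autonomous; everything after that is a direct appeal to the contraction mapping theorem.
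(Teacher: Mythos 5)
Your proposal is correct and follows essentially the same route as the paper's proof: both reduce the intervened dynamics to iteration of the autonomous map obtained by composing $x \mapsto Ax + e$ with the coordinate-overwriting operation, apply Lemma~\ref{lemma:contraction_add} and Lemma~\ref{lemma:contraction_intervene} to conclude this composite is a contraction for each fixed realisation of $E$, and invoke the contraction mapping theorem. Your added remarks on why the identical-noise assumption makes the update map time-independent are a fair elaboration of what the paper leaves implicit.
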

\begin{proof}
Consider, without loss of generality, the intervention
\[ \doop(X_t^j=x_j\ \forall t\in\mathbb{Z},\forall j \leq m \leq n) \in \mathcal{I}_X \]
for $m \in [n]$ (for $m=0$ this amounts to the null-intervention).
The structural equations under this intervention are
\begin{align*}
\begin{cases}
X^k_{t+1} = x_k \quad &\text{if} \:  k \leq m \\
X^k_{t+1} = \sum_j A_{kj}X^j_{t} + E^k  \quad &\text{if} \: m < k \leq n
\end{cases}
\end{align*}
and thus the sequence $X_t$ can be seen to transition according to the function $f = g\circ h$, where
\begin{align*}
    h: \mathbb{R}^n &\to \mathbb{R}^n \\
    v &\mapsto w = Av + E \\
    \\
    g : \mathbb{R}^n &\to \mathbb{R}^n \\
    w = \begin{pmatrix}w_1\\\vdots\\w_n\end{pmatrix} &\mapsto
    \begin{pmatrix} x_1\\\vdots\\x_m\\w_{m+1}\\\vdots\\w_n \end{pmatrix}
\end{align*}
By Lemma~\ref{lemma:contraction_add} and Lemma~\ref{lemma:contraction_intervene}, $f$ is a contraction mapping for any fixed $E$.
Thus, by the contraction mapping theorem, the sequence of $X_t$ converges everywhere to a unique fixed point.
\end{proof}

\bibliographyappendix{references}

\end{document}